\newcommand*{\addFileDependency}[1]{
  \typeout{(#1)}
  \@addtofilelist{#1}
  \IfFileExists{#1}{}{\typeout{No file #1.}}
}
\newcommand{\mtx}[1]{\boldsymbol{#1}}
\DeclareMathOperator*{\argmax}{arg\,max}
\newcommand{\mW}{\mtx{W}}
\DeclareMathOperator{\sign}{sign}
\DeclareMathOperator{\Var}{Var}
\DeclareMathOperator{\Tr}{Tr}
\newcommand{\embed}{\boldsymbol{\psi}}
\renewcommand{\epsilon}{\varepsilon}
\newtheorem{lemma}{Lemma}[section]
\newtheorem{theorem}{Theorem}[section]
\newtheorem{corollary}{Corollary}[section]
\newtheorem*{corollary*}{Corollary}
\newtheorem{assumption}{Assumption}[section]
\theoremstyle{remark}
\newtheorem{remark}{Remark}[section]
\title[Laplace-HDC]{Laplace-HDC: Understanding the geometry of binary hyperdimensional computing}
\author[S. Pourmand]{Saeid Pourmand\textsuperscript{\textdagger}}
\email{pourmans@oregonstate.edu}
\address{School of Electrical Engineering and Computer Science, Oregon State University}
\author[W.D. Whiting]{Wyatt D. Whiting\textsuperscript{\textdagger *}}
\email{whitinwy@oregonstate.edu}
\address{Department of Mathematics, Oregon State University}
\author[A. Aghasi]{Alireza Aghasi}
\email{alireza.aghasi@oregonstate.edu}
\address{School of Electrical Engineering and Computer Science, Oregon State University}
\author[N.F. Marshall]{Nicholas F. Marshall}
\email{marsnich@oregonstate.edu}
\address{Department of Mathematics, Oregon State University}
\thanks{\textsuperscript{\textdagger}These authors contributed equally. \textsuperscript{*}Corresponding author}
\keywords{Hyperdimensional computing, Laplace kernel, efficient computing, kernel methods}
\begin{document}
\begin{abstract}
This paper studies the geometry of binary hyperdimensional computing (HDC), a computational scheme in which data are encoded using high-dimensional binary vectors. We establish a result about the similarity structure induced by the HDC binding operator and show that the Laplace kernel naturally arises in this setting, motivating our new encoding method \emph{Laplace-HDC}, which improves upon previous methods. We describe how our results indicate limitations of binary HDC in encoding spatial information from images and discuss potential solutions, including using Haar convolutional features and the definition of a translation-equivariant HDC encoding. Several numerical experiments highlighting the improved accuracy of Laplace-HDC in contrast to alternative methods are presented. We also numerically study other aspects of the proposed framework, such as robustness and the underlying translation-equivariant encoding.
\end{abstract}
\maketitle

\section{Introduction}
Hyperdimensional computing (HDC) is a computational paradigm rooted in cognitive science and inspired by the operation of the brain. With billions of neurons and trillions of synapses, the human brain exhibits states akin to high-dimensional arrays. Unlike conventional machine learning models that work with floating-point operations, the brain's processes engage in simpler ``arithmetic'' but across significantly higher dimensions (such as the operations in the cerebral cortex). HDC aims to mimic the brain's operation by encoding data with high-dimensional vectors, called hypervectors, while using simple operations, such as the XOR operation. Hypervectors are often defined randomly or pseudo-randomly and can have entries that are binary, integer, real, or complex \cite{Kleyko2023A}; however,  in this paper, we restrict our attention to binary HDC models, which are the most common in practice.
In contrast with typical floating-point operations on data, the simplicity of binary operations makes binary HDC computationally straightforward and amenable to hardware-level optimization \cite{onlineHD2021}. 

Similar to the cognitive operations of the brain, HDC is robust to noise, heavily distributable, interpretable, and energy efficient \cite{Kanerva2009bg,Thomas2021}. Additionally, HDC models can undergo single-pass training, where a model is trained by processing each sample in the data set only once. The simplicity of arithmetic tasks and their parallel nature facilitate rapid inference for these models. Thanks to these attributes, HDC is a well-suited framework for the Internet of Things (IoT) and edge devices \cite{Khalegi2021,Kleyko2023B}, where resilience to noise and straightforward computations hold significant importance. Despite the simple underlying arithmetic, HDC models are considered across various complex tasks, such as speech recognition \cite{Imani2017}, written language classification \cite{Geethan2021}, DNA pattern matching \cite{kim2020}, robotics \cite{Neubert2019}, image description tasks \cite{neubert2021,schlegel2022comparison}, and low energy computing \cite{basklar2021,Chuang2020,Imani2019,Karunaratne2020}. 

A significant challenge associated with HDC models is their relatively low accuracy. For instance, as demonstrated in the experiments section below, standard HDC models attain an average accuracy rate of 82\% on the MNIST handwritten digit classification task, while several variants of deep neural networks can achieve accuracies above 99.5\% \cite{byerly2021no,hirata2023ensemble,kabir2022spinalnet}. Standard HDC modeling consists of three main steps: hyperdimensional embedding, single-pass training, and inference. Efforts to enhance the accuracy of these models typically involve either refining one of these steps or proposing additional steps in the pipeline. For example, a standard HDC model uses a record-based encoding, which involves the binding and bundling of random hypervectors associated with the feature positions and feature values, while to boost the accuracy for temporal or spatial data, an $N$-gram-based encoding may be considered, where feature positions are encoded into the hypervectors through rotational permutations \cite{Ge2020Classification}. As an example of changing the number of steps in the process, OnlineHD \cite{onlineHD2021} is an HDC framework where an additional retraining phase is integrated into the modeling pipeline. This augmentation boosts the model accuracy by discarding the mispredicted queries from the corresponding mispredicted classes and adding them to the correct class.

\subsection{Motivation} \label{sec:motivation}
In this paper, we are primarily motivated by the work of Yu \emph{et al.} \cite{yu2022understanding}, which, in contrast to previous works that mainly focused on using either deterministic hypervectors or random hypervectors with i.i.d. entries, considers
constructions of random hypervectors with a prescribed covariance structure. They refer to their approach as RFF-HDC and empirically show that it outperforms previous HDC schemes.
More precisely, given a desired covariance structure $\boldsymbol{K} \in \mathbb{R}^{m \times m}$, Yu \emph{et al.} \cite{yu2022understanding}
uses the following algorithm  to construct a matrix $\boldsymbol{V} \in \{-1,+1\}^{N \times m}$ of $m$ hypervectors.

\begin{minipage}{\textwidth}
 \begin{algorithmic}
        \Require Similarity matrix $\boldsymbol{K} \in \mathbb{R}^{m \times m}$, hyperdimension $N$
        \State $\boldsymbol{W} \gets \sin\left(\frac{\pi}{2} \boldsymbol{K} \right)$ \qquad (where the function $\sin$ is applied entrywise)
        \State $\boldsymbol{U} \boldsymbol{S} \boldsymbol{U}^T \gets \boldsymbol{W}$ \qquad (eigendecomposition)
        \State Generate $\boldsymbol{G} \in \mathbb{R}^{N \times m}$ with i.i.d. standard Gaussian entries
        \State $\boldsymbol{V} \gets
        \sign
        (\boldsymbol{G} \boldsymbol{S}_+^{1/2} \boldsymbol{U})$
        \qquad (where $\sign$ is applied entrywise and $\boldsymbol{S}_+$ sets negative entries to $0$).
        \State \Return $\boldsymbol{V} \in \{-1,+1\}^{N \times m}$
\end{algorithmic}
\end{minipage}

When $\boldsymbol{S} = \boldsymbol{S}_+$, that is, when $\boldsymbol{W}$ is positive semi-definite, it follows that $\boldsymbol{V}^\top \boldsymbol{V}/N = \boldsymbol{K}$, see \S \ref{definehypervec} for a more precise statement and mathematical description. 
The assumption that $\boldsymbol W$ is positive semi-definite constrains the types of similarities that can be achieved using this algorithm. However, Yu \emph{et al.} \cite{yu2022understanding} shows that some restriction is necessary by proving that there are covariance structures $\boldsymbol{K}$ which are impossible to realize using binary hypervectors.

The current paper builds upon Yu \emph{et al. } \cite{yu2022understanding} by studying the geometry of HDC encodings resulting from constructions using hypervectors with a covariance structure. More precisely, we consider the similarity structure of the embedding space induced by the HDC binding operation, see \eqref{generalencoding:scheme}. We show that the Laplace kernel naturally arises in this context, and our results provide heuristics for choosing effective distributions of hypervectors. Moreover, we consider several modifications to the HDC pipeline, which further boost the accuracy of HDC models. We demonstrate theoretically and empirically how spatial information for images is lost in the similarity structure of certain HDC encoding schemes and present methods of retaining this information, including the definition of a translation-equivariant HDC encoding scheme. In addition to conducting empirical experiments to assess the proposed framework's performance compared to state-of-the-art techniques, mathematical tools are used to explore theoretical aspects. 

We emphasize that some models we explore (as with previous work) involve using floating-point operations during the construction of the hypervectors or training stages of the models. We will emphasize when this is the case. Moreover, as we will discuss, these models have variants that can ultimately be fully represented and operated in a binary mode for inference, and results for binary inference will be presented.

\subsection{Preliminaries and Notation}
While implementations of binary HDC use vectors $\boldsymbol{x} \in \{0,1\}^N$ (for some large $N$ on the order of $N = 10^4$) equipped with entrywise XOR (denoted $\oplus$),
we may conceptualize these vectors as being in $\{-1,+1\}^N$ equipped with entrywise product (denoted $\odot$). These two representations are isomorphic: if $\phi: \{0,1\}^N\rightarrow \{-1,+1\}^N$ by 
$\boldsymbol{x} \mapsto \phi(\boldsymbol{x}) = \boldsymbol{1} - 2 \boldsymbol{x}$ then
$$
\phi(\boldsymbol x(i)\oplus \boldsymbol y(i)) = 1 - 2(\boldsymbol x(i)\oplus \boldsymbol y(i)) = \phi(\boldsymbol x(i)) \odot \phi(\boldsymbol y(i)),
$$
for all $i = 1,\ldots,N$, where $\boldsymbol x(i)$ denotes the $i$-th entry of $\boldsymbol{x}$. In this paper, we use the $(\{-1,+1\}^N, \odot)$ representation of binary HDC schemes.

For a given hypervector $\boldsymbol{u}   \in \{-1,+1\}^N$, we denote its $k$-th entry 
by $\boldsymbol{u}(k)$. We write $\boldsymbol{e}_k$ to denote the $k$-th standard basis vector whose $k$-th entry is equal to $1$ and which is zero elsewhere. We use the convention that hypervectors are column vectors such that the inner product can be expressed by 
$$
\boldsymbol{u}^\top \boldsymbol{v}  = \sum_{k=1}^N \boldsymbol{u}(k) \boldsymbol{v}(k),
$$
where $\boldsymbol{u}^\top$ denotes the transpose of $\boldsymbol{u}$, and write $\boldsymbol{u} \odot \boldsymbol{v}$ to denote the entrywise product 
$$
(\boldsymbol{u} \odot \boldsymbol{v})(k) = \boldsymbol{u}(k) \boldsymbol{v}(k),
$$
for $k = 1,\ldots, N$. Given a set of $d$ hypervectors $\boldsymbol{v}_1,\ldots,\boldsymbol{v}_d$, let
$$
\bigodot_{j=1}^d \boldsymbol{v}_j = \boldsymbol{v}_1 \odot \cdots \odot \boldsymbol{v}_d,
$$
denote the entrywise product over all vectors in the set.  For a matrix $\boldsymbol{A} \in \mathbb{R}^{N \times N}$, we write $\|\boldsymbol{A} \|_0$ to denote the number of nonzero entries of $\boldsymbol{A}$
$$
\|\boldsymbol{A}\|_0 := \# \{ (i,j) \in \{1,\ldots,N\}^2 : \boldsymbol{A}(i,j) \not = 0 \},
$$
where $\boldsymbol{A}(i,j)$ denotes the $(i,j)$-th entry of $\boldsymbol{A}$, and $\#$ denotes the counting measure.

\subsection{Defining an Embedding} 
Assume that data $\mathcal{X} \subset \{1,\ldots,m\}^d$ are given, that is, each $\boldsymbol{x} \in \mathcal{X}$ is a $d$-dimensional vector whose entries are integers in the set $\{1,\ldots,m\}$. Let $m$ hypervectors
$$
\boldsymbol{v}_1,\ldots,\boldsymbol{v}_{m} \in \{-1,+1\}^N,
$$
be given (see \S \ref{definehypervec} for a discussion of constructing these hypervectors).
Further, let $\mathcal{P}$ be a set $d$ permutation matrices of size $N\times N$
$$
\mathcal{P}= \{\boldsymbol{\Pi}_1, \boldsymbol{\Pi}_2, \ldots, \boldsymbol{\Pi}_d\};
$$
 examples of families of permutation matrices of interest are discussed below.
The binding operation which maps $\mathcal{X} \rightarrow \{-1,+1\}^N$ is defined by
\begin{equation}\label{generalencoding:scheme}
\boldsymbol{x} \mapsto \embed_{\boldsymbol{x}}  = \bigodot_{i=1}^d \boldsymbol{\Pi}_i \boldsymbol{v}_{\boldsymbol{x}(i)}.
\end{equation}
For the encoding $\boldsymbol{\psi}_{\boldsymbol{x}}$ to be meaningful, some assumptions must be imposed on the permutation matrices. We make the following trace-orthogonality assumption.
\begin{assumption}[Trace-orthogonal family of permutations] \label{assumenooverlap} 
We say that a family of permutations 
$\mathcal{P}= \{\boldsymbol{\Pi}_1, \boldsymbol{\Pi}_2, \ldots, \boldsymbol{\Pi}_d\}$ is
trace-orthogonal if
\begin{equation} \label{nooverlap}
\left\langle \boldsymbol{\Pi}_i, \boldsymbol{\Pi}_{i'}\right\rangle = \Tr\left(\boldsymbol{\Pi}_i^\top \boldsymbol{\Pi}_{i'}\right) =  0, \qquad \forall i,i'\in\{1,\ldots,d\}, ~i\neq i'.
\end{equation}
\end{assumption}
It is straightforward to construct a family of trace-orthogonal permutations using cyclic shifts (under the necessary assumption that $d \leq N$).

\begin{remark}[1D-Cyclic family] \label{cyclicshifts}
For $i \in \{1,\ldots,d\}$ let $\boldsymbol{T}_i^\text{1D-Cyclic}$ denote the $N \times N$ permutation matrix, which acts on $\boldsymbol{v} \in \{-1,+1\}^N$ by
\begin{equation} \label{cyclicshiftseq}
(\boldsymbol{T}_i^\text{1D-Cyclic} \boldsymbol{v})(i') = \boldsymbol{v}(i+i'), \quad \text{for} \quad i' \in \{1,\ldots,N\},
\end{equation}
where the addition $i+i'$ is taken modulo $N$. That is, 
$\boldsymbol{T}_i^\text{1D-Cyclic}$ can be defined entrywise by
$$
\boldsymbol{T}_i^\text{1D-Cyclic}(j,j') = \left\{
\begin{array}{cl}
1 & \text{if } j = j' + i \mod N \\
0 & \text{otherwise}.
\end{array}
\right.
$$
When $i \not = i'$ and $d \le N$, the support of $\boldsymbol{T}_i^\text{1D-Cyclic}$ and $\boldsymbol{T}_{i'}^\text{1D-Cyclic}$ are disjoint so the trace-orthogonal property holds.
\end{remark}

\begin{remark}[1D-Block Cyclic family] \label{1dblock}
Suppose that $N = d M$ for some positive integer $M$. Then, another family of trace-orthogonal permutations matrices  $\{ \boldsymbol{T}_i^\text{1D-Block} : i \in \{1,\ldots,d\}\}$ can be defined by their action on $\boldsymbol{v} \in \{-1,+1\}^{d \times M}$ by
\begin{equation} \label{1dblockeq}
(\boldsymbol{T}_i^\text{1D-Block} \boldsymbol{v})(i',k) = \boldsymbol{v}(i+i',k), \quad \text{for} \quad (i',k) \in \{1,\ldots,d\} \times \{1,\ldots,M\},
\end{equation}
where the addition $i+i'$ is taken modulo $d$. It is straightforward to verify that this 1D-Block Cyclic family is also trace-orthogonal.
\end{remark}

\subsection{Constructing Hypervectors} \label{definehypervec} 
We choose hypervectors using the method of
Yu \emph{et al.} \cite{yu2022understanding} outlined in \S \ref{sec:motivation} above. In the following, 
we describe this construction in detail and provide related mathematical preliminaries.
Given an affinity matrix $\boldsymbol{K} \in \mathbb{R}^{m \times m}$, the goal is to construct hypervectors $\boldsymbol{v}_1,\ldots,\boldsymbol{v}_m \in \{-1,+1\}^N$ such that
$$
\mathbb{E} \frac{\boldsymbol{v}_i^\top \boldsymbol{v}_j}{N} =  \boldsymbol{K}(i,j).
$$
Recall Grothendieck's identity (see, for example, Vershynin \cite[page 63]{vershynin2018high}).

\begin{lemma}[Grothendieck's identity]  \label{grothendieck}
Let $\boldsymbol{g}$ be an $n$-dimensional vector with i.i.d. random standard Gaussian entries.
Then, for any fixed vectors $\boldsymbol{u},\boldsymbol{v} \in \mathbb{S}^{n-1}$, we have
$$
\mathbb{E}( \sign(\boldsymbol{g}^\top \boldsymbol{u}) \sign(\boldsymbol{g}^\top \boldsymbol{v}) ) = \frac{2}{\pi} \arcsin( \boldsymbol{u}^\top \boldsymbol{v} ),
$$
where $\mathbb{S}^{n-1} = \{ x \in \mathbb{R}^n : \|x\|_2 = 1\}$.
\end{lemma}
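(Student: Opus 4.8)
The plan is to reduce the expectation to a probability of sign disagreement and then evaluate that probability by a planar geometric argument. Since $\sign(\boldsymbol{g}^\top \boldsymbol{u})\sign(\boldsymbol{g}^\top \boldsymbol{v})$ equals $+1$ when the two inner products share a sign and $-1$ otherwise, I would first write
$$
\mathbb{E}\big(\sign(\boldsymbol{g}^\top \boldsymbol{u})\sign(\boldsymbol{g}^\top \boldsymbol{v})\big) = 1 - 2\,\mathbb{P}\big(\sign(\boldsymbol{g}^\top \boldsymbol{u}) \neq \sign(\boldsymbol{g}^\top \boldsymbol{v})\big),
$$
so that everything reduces to computing the probability that the two random hyperplane-sign labels disagree.

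The key reduction is that the pair $(\boldsymbol{g}^\top \boldsymbol{u}, \boldsymbol{g}^\top \boldsymbol{v})$ is a centered jointly Gaussian vector whose law depends only on $\boldsymbol{u}^\top \boldsymbol{v}$: since $\mathbb{E}[\boldsymbol{g}\boldsymbol{g}^\top] = \boldsymbol{I}$, the marginal variances are $\|\boldsymbol{u}\|_2^2 = \|\boldsymbol{v}\|_2^2 = 1$ and the covariance is $\boldsymbol{u}^\top \boldsymbol{v}$. Equivalently, by the rotational invariance of the standard Gaussian I would rotate coordinates so that $\boldsymbol{u}$ and $\boldsymbol{v}$ lie in a common two-dimensional plane; only the projection of $\boldsymbol{g}$ onto $\mathrm{span}\{\boldsymbol{u},\boldsymbol{v}\}$ enters either inner product, so the orthogonal complement contributes nothing. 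This collapses the problem to a standard two-dimensional Gaussian $\boldsymbol{h}$ and two unit vectors separated by the angle $\theta := \arccos(\boldsymbol{u}^\top \boldsymbol{v})$.

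In the plane, the event $\sign(\boldsymbol{h}^\top \boldsymbol{u}) \neq \sign(\boldsymbol{h}^\top \boldsymbol{v})$ is exactly the event that $\boldsymbol{h}$ lands in one of the two opposite wedges bounded by the lines $\boldsymbol{u}^\perp$ and $\boldsymbol{v}^\perp$; because these lines meet at angle $\theta$, the wedges subtend total angle $2\theta$. By the rotational symmetry of the planar standard Gaussian, the probability of landing in these wedges is simply $2\theta/(2\pi) = \theta/\pi$. Substituting back gives
$$
\mathbb{E}\big(\sign(\boldsymbol{g}^\top \boldsymbol{u})\sign(\boldsymbol{g}^\top \boldsymbol{v})\big) = 1 - \frac{2}{\pi}\arccos(\boldsymbol{u}^\top \boldsymbol{v}) = \frac{2}{\pi}\arcsin(\boldsymbol{u}^\top \boldsymbol{v}),
$$
where the final equality uses $\arccos(t) = \pi/2 - \arcsin(t)$.

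The steps are all elementary; the one that needs care is the reduction to the plane and the corresponding geometric identification of the disagreement event. I expect the main subtlety to be justifying cleanly that only $\boldsymbol{u}^\top \boldsymbol{v}$ enters—either via the explicit $2\times 2$ covariance of $(\boldsymbol{g}^\top\boldsymbol{u}, \boldsymbol{g}^\top\boldsymbol{v})$ together with the scale-invariance of the sign, or via an orthogonal change of basis aligning $\boldsymbol{u},\boldsymbol{v}$ with two coordinate directions—and in handling the degenerate cases $\boldsymbol{u}^\top\boldsymbol{v} = \pm 1$, where $\theta \in \{0,\pi\}$, as boundary instances of the wedge computation.
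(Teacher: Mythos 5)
Your proof is correct and complete: the reduction of the expectation to $1-2\,\mathbb{P}\big(\sign(\boldsymbol{g}^\top\boldsymbol{u})\neq\sign(\boldsymbol{g}^\top\boldsymbol{v})\big)$, the collapse to the plane $\mathrm{span}\{\boldsymbol{u},\boldsymbol{v}\}$ by rotational invariance, the wedge-angle computation giving disagreement probability $\theta/\pi$ with $\theta=\arccos(\boldsymbol{u}^\top\boldsymbol{v})$, and the identity $\arccos(t)=\pi/2-\arcsin(t)$ are all valid, while the boundary cases $\boldsymbol{u}^\top\boldsymbol{v}=\pm 1$ and the event $\boldsymbol{g}^\top\boldsymbol{u}=0$ (where the paper's convention sets $\sign(0)=+1$) occur consistently or with probability zero and so cause no trouble. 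The paper does not supply its own proof of this lemma, deferring instead to Vershynin's textbook, and your argument is precisely the standard proof found there, so the two approaches coincide.
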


Suppose that an affinity kernel matrix $\boldsymbol{K} \in \mathbb{R}^{m \times m}$ is given, and define $\boldsymbol{W} \in \mathbb{R}^{m \times m}$ by
\begin{equation} \label{eqWdef}
\boldsymbol{W}(i,j) = \sin \left( \frac{\pi}{2} \boldsymbol{K}(i,j) \right),
\end{equation}
for $i,j = 1,\ldots,m$. The construction is effective when the following assumption, which restricts the possible choices of $\boldsymbol{K}$, is satisfied.
\begin{assumption}[Admissible affinity kernel] \label{admissiblekernel} We say that an affinity kernel $\boldsymbol{K} \in \mathbb{R}^{m \times m}$ is admissible if the matrix $\boldsymbol{W}$ defined by \eqref{eqWdef} is symmetric positive semi-definite and $\boldsymbol{K}(i,i) = 1$ for all $i = 1,\ldots,m$.
\end{assumption}
An example of a family of admissible affinity kernels is provided in Corollary \ref{cor1}; also see \S \ref{admissiblekernel}. For now, 
we proceed under the assumption that $\boldsymbol{W}$ is a symmetric positive semi-definite matrix, which implies $\boldsymbol{W}$ can be decomposed as 
$$
\boldsymbol{W} = \boldsymbol{U}^\top  \boldsymbol{U},
$$
where $\boldsymbol{U}$ is a real-valued $n \times n$ matrix. 
Note that if $\boldsymbol{W}$ is not positive semi-definite, it is possible to truncate the negative eigenvalues to achieve a decomposition of this form, see \S \ref{sec:motivation}; however, in this case, the construction will not achieve hypervectors with the covariance structure of $\boldsymbol{K}$.

Let $\boldsymbol{G} \in \mathbb{R}^{N \times m}$ be a matrix whose entries are independent Gaussian random variables with mean $0$ and variance $1$. Define the matrix $\boldsymbol{V} \in \{-1,+1\}^{N \times m}$ by
$$
\boldsymbol{V} = \sign( \boldsymbol{G} \boldsymbol{U} ),
$$
where $\sign$ denotes the entrywise sign function 
with the convention that $0$ has sign $+1$. Let $\boldsymbol{v}_k$ denote the $k$-th column of $\boldsymbol{V}$. Then, the hypervectors
$$
\boldsymbol{v}_1,\ldots,\boldsymbol{v}_m \in \{-1,+1\}^N,
$$
have the desired covariance structure in expectation. Indeed,
we claim that
\begin{equation}\label{simvivj:scalar}
    \mathbb{E} \boldsymbol{v}_i(k) \boldsymbol{v}_j(k) =  \boldsymbol{K}(i,j),\quad \text{for all} \quad k\in \{1,\ldots,N\},
\end{equation}
which in turn implies
\begin{equation} \label{simvivj}
\mathbb{E} \frac{\boldsymbol{v}_i^\top \boldsymbol{v}_j}{N} =  \boldsymbol{K}(i,j).
\end{equation}
To show \eqref{simvivj:scalar}, we note that by the definition of $\boldsymbol{v}_j$ we have
$$
\mathbb{E} \boldsymbol{v}_i(k) \boldsymbol{v}_j(k) = \mathbb{E} \sign(\boldsymbol{g}_k^\top \boldsymbol{u}_i) \sign(\boldsymbol{g}_k^\top \boldsymbol{u}_j),
$$
where $\boldsymbol{g}_k^\top$ is the $k$-th row of $\boldsymbol{G}$ and $\boldsymbol{u}_j$ is the $j$-th column of $\boldsymbol{U}$. Recall that in the definition of an admissible kernel, we assume
 $\boldsymbol{K}(i,i) = 1$. It follows that 
$$
\|\boldsymbol{u}_i \|_2^2 = \boldsymbol{u}_i^\top \boldsymbol{u}_i = \boldsymbol{W}(i,i) = \sin\left(\frac{\pi}{2} \boldsymbol{K}(i,i) \right) = 1,
$$ 
for $i=1,\ldots,m$. Thus, Grothendieck's Identity  (see Lemma \ref{grothendieck} above) can be applied to deduce that
\begin{equation} \label{expect}
 \mathbb{E} \sign(\boldsymbol{g}_k^\top \boldsymbol{u}_i) \sign(\boldsymbol{g}_k^\top \boldsymbol{u}_j)
=  \frac{2}{\pi} \arcsin( \boldsymbol{u}_i^\top \boldsymbol{u}_j ).
\end{equation}
By the definition of $\boldsymbol{U}$ we have
$$
\frac{2}{\pi} \arcsin( \boldsymbol{u}_i^\top \boldsymbol{u}_j ) = \frac{2}{\pi} \arcsin \left( \boldsymbol{W}(i,j) \right) = \boldsymbol{K}(i,j),
$$
where the final equality follows from the definition of $\boldsymbol{W}$.

\subsection{Main Analytic Result} \label{mainresult}

Let data $\mathcal{X} \subset \{1,\ldots,m\}^d$ be given, and  fix a hyperdimension $N \ge d$. Let $\mathcal{P}= \{\boldsymbol{\Pi}_1, \boldsymbol{\Pi}_2, \ldots, \boldsymbol{\Pi}_d\}$ be a family of $N \times N$ permutation matrices satisfying Assumption \ref{assumenooverlap}. Assume that $\boldsymbol{K} \in \mathbb{R}^{m \times m}$ is an affinity kernel which is admissible in the sense of Assumption \ref{admissiblekernel}. Construct $\boldsymbol{v}_1,\ldots,\boldsymbol{v}_{m} \in \{-1,+1\}^N$
using $\mathcal{P}$ and $\boldsymbol{K}$ in the procedure described in \S \ref{definehypervec}. Define the embedding $\embed_{\boldsymbol{x}} \in \{-1,+1\}^N$ of a vector $\boldsymbol{x} \in \mathcal{X}$ in terms of the binding operation
\begin{equation}\label{generalencoding:scheme2}
\boldsymbol{x} \mapsto \embed_{\boldsymbol{x}}  = \bigodot_{i=1}^d \boldsymbol{\Pi}_i \boldsymbol{v}_{\boldsymbol{x}(i)}.
\end{equation}
The following theorem is our main analytic result.

\begin{theorem} \label{thm1} Under the assumptions of \S \ref{mainresult}, we have
\begin{equation} \label{thm1sxy}
S(\boldsymbol{x},\boldsymbol{y}) := \mathbb{E} \frac{\embed_{\boldsymbol{x}}^\top \embed_{\boldsymbol{y}}}{N}  = \prod_{i=1}^d  \boldsymbol{K}(\boldsymbol{x}(i),\boldsymbol{y}(i)),  
\end{equation}
and
$$
\Var \left( \frac{ \embed_{\boldsymbol{x}}^\top \embed_{\boldsymbol{y}}}{N} \right)   \le \frac{2\gamma_\mathcal{P}}{N^2} (1-S(\boldsymbol{x},\boldsymbol{y})),
$$
where for the set $\mathcal{P}= \{\boldsymbol{\Pi}_1, \boldsymbol{\Pi}_2, \ldots, \boldsymbol{\Pi}_d\}$:
\begin{equation} \label{gammadef}
\gamma_{\mathcal{P}} = \left\|\sum_{i'=1}^d\sum_{i=1}^d\boldsymbol{\Pi}_i\boldsymbol{\Pi}_{i'}^\top \right\|_0. 
\end{equation}
\end{theorem}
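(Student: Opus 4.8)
The plan is to work coordinatewise. Writing $\pi_i$ for the permutation with $(\boldsymbol{\Pi}_i \boldsymbol{v})(k) = \boldsymbol{v}(\pi_i(k))$, I would set $Z_k := \embed_{\boldsymbol{x}}(k)\embed_{\boldsymbol{y}}(k) = \prod_{i=1}^d \boldsymbol{v}_{\boldsymbol{x}(i)}(\pi_i(k))\,\boldsymbol{v}_{\boldsymbol{y}(i)}(\pi_i(k))$, so that $\embed_{\boldsymbol{x}}^\top\embed_{\boldsymbol{y}} = \sum_{k=1}^N Z_k$ with each $Z_k \in \{-1,+1\}$. The first key observation is that the construction in \S\ref{definehypervec} makes the rows of $\boldsymbol{V}$ independent: since $\boldsymbol{v}_j(k) = \sign(\boldsymbol{g}_k^\top\boldsymbol{u}_j)$ depends only on the $k$-th row $\boldsymbol{g}_k$ of $\boldsymbol{G}$ and the $\boldsymbol{g}_k$ are i.i.d., the vectors $(\boldsymbol{v}_1(r),\ldots,\boldsymbol{v}_m(r))$ are mutually independent across $r$. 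The second key observation translates Assumption \ref{assumenooverlap}: since $\langle\boldsymbol{\Pi}_i,\boldsymbol{\Pi}_{i'}\rangle = \#\{k : \pi_i(k) = \pi_{i'}(k)\}$, trace-orthogonality says exactly that $\pi_i(k)\neq\pi_{i'}(k)$ for all $k$ whenever $i\neq i'$; hence for each fixed $k$ the indices $\pi_1(k),\ldots,\pi_d(k)$ are distinct.

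For the mean, I would combine these two facts. For fixed $k$ the $i$-th factor of $Z_k$ depends only on row $\pi_i(k)$, and these rows are distinct, so independence across rows lets the expectation factor: $\mathbb{E} Z_k = \prod_{i=1}^d \mathbb{E}[\boldsymbol{v}_{\boldsymbol{x}(i)}(\pi_i(k))\boldsymbol{v}_{\boldsymbol{y}(i)}(\pi_i(k))]$. Each factor equals $\boldsymbol{K}(\boldsymbol{x}(i),\boldsymbol{y}(i))$ by \eqref{simvivj:scalar}, which holds for every row index, so $\mathbb{E} Z_k = \prod_{i=1}^d \boldsymbol{K}(\boldsymbol{x}(i),\boldsymbol{y}(i)) = S(\boldsymbol{x},\boldsymbol{y})$ independent of $k$. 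Averaging over $k$ gives \eqref{thm1sxy}.

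For the variance, I would expand $\Var(\sum_k Z_k/N) = N^{-2}\sum_{k,k'}\mathrm{Cov}(Z_k,Z_{k'})$ and control which terms survive. Define $R_k = \{\pi_1(k),\ldots,\pi_d(k)\}$, the set of rows that $Z_k$ reads. If $R_k\cap R_{k'}=\emptyset$, then $Z_k$ and $Z_{k'}$ are functions of disjoint (hence independent) collections of rows, so $\mathrm{Cov}(Z_k,Z_{k'})=0$. The crucial bookkeeping step is to recognize $\gamma_{\mathcal{P}}$ as the number of surviving pairs: computing $(\boldsymbol{\Pi}_i\boldsymbol{\Pi}_{i'}^\top)(k,k') = \mathbf{1}\{\pi_i(k)=\pi_{i'}(k')\}$ shows that the $(k,k')$ entry of $\sum_{i,i'}\boldsymbol{\Pi}_i\boldsymbol{\Pi}_{i'}^\top$ counts the coincidences between $R_k$ and $R_{k'}$ and is therefore nonzero precisely when $R_k\cap R_{k'}\neq\emptyset$; thus $\gamma_{\mathcal{P}} = \#\{(k,k') : R_k\cap R_{k'}\neq\emptyset\}$, a count that includes all $N$ diagonal pairs since $R_k\cap R_k\neq\emptyset$.

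It remains to bound a surviving covariance, and here the crude estimate suffices: since $Z_kZ_{k'}\in\{-1,+1\}$ we have $\mathbb{E}[Z_kZ_{k'}]\le 1$, and since $S=S(\boldsymbol{x},\boldsymbol{y})\in[-1,1]$ (each factor $\boldsymbol{K}(\boldsymbol{x}(i),\boldsymbol{y}(i))$ lies in $[-1,1]$, being the expectation of a $\pm1$ variable), we get $\mathrm{Cov}(Z_k,Z_{k'})\le 1-S^2 = (1-S)(1+S)\le 2(1-S)$. Summing this bound over the $\gamma_{\mathcal{P}}$ surviving pairs and discarding the vanishing remainder yields $\Var(\embed_{\boldsymbol{x}}^\top\embed_{\boldsymbol{y}}/N)\le 2\gamma_{\mathcal{P}}(1-S)/N^2$. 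I expect the main obstacle to be the bookkeeping in the third step—proving that the abstractly defined $\gamma_{\mathcal{P}}$ equals the combinatorial count of overlapping row-supports—rather than the probabilistic estimates, which are immediate once the independence-across-rows structure is in place.
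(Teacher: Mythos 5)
Your proposal is correct and follows essentially the same route as the paper's proof: the coordinatewise factorization of the mean uses exactly the paper's two ingredients (independence of rows of $\boldsymbol{V}$ and trace-orthogonality forcing $\pi_1(k),\ldots,\pi_d(k)$ to be distinct), and your set of pairs $(k,k')$ with $R_k\cap R_{k'}\neq\emptyset$ is precisely the paper's set $\Omega=\{(j,j'):\boldsymbol{Q}(j,j')>0\}$ with $\gamma_{\mathcal{P}}=|\Omega|$. Your covariance-by-covariance bookkeeping ($\mathrm{Cov}\le 1-S^2\le 2(1-S)$ on overlapping pairs, zero otherwise) is just an algebraic rearrangement of the paper's second-moment split $E_\Omega+E_{\Omega^c}$, so the two arguments coincide in substance.
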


The proof of Theorem \ref{thm1} is given in \S \ref{proofthm1}.
The following corollary states this result for the case where $\mathcal{P}$ is the 1D-Cyclic family of permutation matrices, and $\boldsymbol{K}$ is approximately the Laplace kernel. To simplify the exposition, we present an informal version of this result here and provide a more technical statement in the following section.

\begin{corollary}[Informal Statement] \label{cor1} 
Let $\mathcal{P} = \{\boldsymbol{\Pi}_1,\ldots,\boldsymbol{\Pi}_d\}$ be the 1D-Cyclic family of permutation matrices defined in Remark \ref{cyclicshifts}, which satisfies Assumption \ref{assumenooverlap}.
It is possible to choose $\boldsymbol{K}$ approximately equal to the Laplace kernel $\boldsymbol{K}(i,j) \approx \exp({-\lambda|i - j|})$ such that Assumption \ref{admissiblekernel} is satisfied. In this case,
\begin{equation} \label{cor1sxy}
S(\boldsymbol{x}, \boldsymbol{y}) :=   \mathbb{E} \left( \frac{\boldsymbol{\psi}_{\boldsymbol{x}}^\top \boldsymbol{\psi}_{\boldsymbol{y}}}{N} \right) \approx \exp(- \lambda \|\boldsymbol{x} - \boldsymbol{y}\|_1).
\end{equation}
Moreover, the variance satisfies
$$
\Var \left( \frac{ \embed_{\boldsymbol{x}}^\top \embed_{\boldsymbol{y}}}{N} \right)   \le \frac{4d-2}{N} (1- S(\boldsymbol{x},\boldsymbol{y})).
$$
\end{corollary}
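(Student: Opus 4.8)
The plan is to reduce the entire statement to Theorem~\ref{thm1}, so that the only genuinely new work is (a) exhibiting an admissible $\boldsymbol{K}$ that approximates the Laplace kernel when $\mathcal{P}$ is the 1D-Cyclic family, and (b) evaluating the constant $\gamma_{\mathcal{P}}$ of \eqref{gammadef}. Both the similarity identity \eqref{cor1sxy} and the variance bound will then follow by substitution into \eqref{thm1sxy} and into the variance estimate of Theorem~\ref{thm1}.

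For (a) I would take $\boldsymbol{K}(i,j) = \exp(-\lambda|i-j|)$, so that $\boldsymbol{K}(i,i)=1$ holds automatically, and the task is to verify that the symmetric Toeplitz matrix $\boldsymbol{W}(i,j) = \sin\!\left(\tfrac{\pi}{2}\exp(-\lambda|i-j|)\right)$, which has unit diagonal, is positive semi-definite. The cleanest route is diagonal dominance: using $0 \le \sin(\tfrac{\pi}{2}t) \le \tfrac{\pi}{2}t$ for $t\in[0,1]$ and summing the resulting geometric series, the off-diagonal absolute row sums of $\boldsymbol{W}$ are bounded by $\pi e^{-\lambda}/(1-e^{-\lambda})$, which is strictly below the diagonal entry $1$ once $\lambda > \ln(1+\pi)$. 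Hence for such $\lambda$ the matrix $\boldsymbol{W}$ is strictly diagonally dominant with positive diagonal, so positive semi-definite, and $\boldsymbol{K}$ is \emph{exactly} admissible. For smaller $\lambda$, where $\boldsymbol{W}$ may fail to be PSD, I would instead analyze the Fourier symbol of the Toeplitz sequence $c_n = \sin(\tfrac{\pi}{2}e^{-\lambda|n|})$, or else replace $\boldsymbol{W}$ by its PSD truncation (discarding negative eigenvalues); the latter perturbs $\boldsymbol{K}$ only by a controllably small amount, which is the source of the approximation symbol in $\boldsymbol{K}(i,j)\approx e^{-\lambda|i-j|}$. I expect this PSD verification to be the main obstacle, since it is the only place where the specific structure of $\sin(\tfrac{\pi}{2}\,\cdot\,)$ and the admissible range of $\lambda$ actually enter; the precise quantitative form is what the formal version of the corollary should pin down.

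Granting admissibility, the similarity formula is immediate from \eqref{thm1sxy}: since $\boldsymbol{K}(i,j)=e^{-\lambda|i-j|}$,
\begin{equation*}
S(\boldsymbol{x},\boldsymbol{y}) = \prod_{i=1}^d \boldsymbol{K}(\boldsymbol{x}(i),\boldsymbol{y}(i)) = \prod_{i=1}^d e^{-\lambda|\boldsymbol{x}(i)-\boldsymbol{y}(i)|} = \exp\!\left(-\lambda \sum_{i=1}^d |\boldsymbol{x}(i)-\boldsymbol{y}(i)| \right) = \exp(-\lambda\|\boldsymbol{x}-\boldsymbol{y}\|_1),
\end{equation*}
with equality replaced by $\approx$ if only the truncated $\boldsymbol{K}$ is used.

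Finally, for (b) I would exploit the circulant structure of the 1D-Cyclic family. Writing $\boldsymbol{\Pi}_i = \boldsymbol{T}_i^{\text{1D-Cyclic}}$ and using that these matrices are powers of a single cyclic shift, one has $\boldsymbol{\Pi}_i\boldsymbol{\Pi}_{i'}^\top = \boldsymbol{T}_{i-i'}^{\text{1D-Cyclic}}$, the cyclic shift by $i-i'$. Collecting terms by their shift $s = i-i'$, which ranges over $\{-(d-1),\ldots,d-1\}$ with multiplicity $d-|s|$, gives
\begin{equation*}
\sum_{i'=1}^d\sum_{i=1}^d \boldsymbol{\Pi}_i\boldsymbol{\Pi}_{i'}^\top = \sum_{s=-(d-1)}^{d-1}(d-|s|)\,\boldsymbol{T}_s^{\text{1D-Cyclic}}.
\end{equation*}
Distinct cyclic shifts have disjoint supports, each of size $N$, and all coefficients $d-|s|$ are strictly positive, so no cancellation occurs; when $2d-1\le N$ the $2d-1$ shifts are distinct modulo $N$ and therefore $\gamma_{\mathcal{P}} = (2d-1)N$, while $\gamma_{\mathcal{P}}\le (2d-1)N$ in general because wrap-around can only merge supports. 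Substituting into the variance bound of Theorem~\ref{thm1} yields $2\gamma_{\mathcal{P}}/N^2 \le 2(2d-1)/N = (4d-2)/N$, which is exactly the claimed estimate.
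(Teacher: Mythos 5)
Your reduction to Theorem~\ref{thm1} is the right skeleton, and your part (b) is correct and complete: for the 1D-Cyclic family $\boldsymbol{\Pi}_i\boldsymbol{\Pi}_{i'}^\top=\boldsymbol{T}_{i-i'}^{\text{1D-Cyclic}}$, the distinct shifts have disjoint supports of size $N$, the coefficients $d-|s|$ are positive so no cancellation occurs, hence $\gamma_{\mathcal{P}}\le(2d-1)N$ (with equality when $2d-1\le N$) and $2\gamma_{\mathcal{P}}/N^2\le(4d-2)/N$. This is exactly the computation that turns Theorem~\ref{thm1} into the corollary's variance bound, and the paper leaves it implicit.

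The gap is in part (a), and it sits precisely where the corollary has content. Your only rigorous admissibility argument---diagonal dominance of $\boldsymbol{W}(i,j)=\sin\left(\tfrac{\pi}{2}e^{-\lambda|i-j|}\right)$---requires $\lambda>\ln(1+\pi)\approx 1.42$. But the corollary (and its precise version, Theorem~\ref{admissiblefamily}) lives in the opposite regime: the approximation $S\approx\exp(-\lambda\|\boldsymbol{x}-\boldsymbol{y}\|_1)$ is meaningful only when $\lambda|i-j|\le\varepsilon$ is small, and the bandwidth rule of Remark~\ref{rem:settingbandwidth} deliberately makes $\lambda$ small. For pixel data with $|i-j|$ up to $255$, taking $\lambda>1.42$ makes $\boldsymbol{K}$ essentially the identity and $S$ vanishingly small, so the case you prove rigorously is degenerate. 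For small $\lambda$ neither of your fallbacks closes the gap. The Fourier-symbol route is not carried out, and it is genuinely delicate: the symbol of the Toeplitz sequence $c_n=\sin\left(\tfrac{\pi}{2}e^{-\lambda n}\right)$ evaluated at $\theta=\pi$ tends to $0$ as $\lambda\to 0$ (the positivity margin is only of order $\lambda^3$), and the sequence violates the convexity hypothesis of the classical positivity criteria near $n=0$, since $\tfrac{d^2}{dt^2}\sin\left(\tfrac{\pi}{2}e^{-\lambda t}\right)\big|_{t=0}=-\tfrac{\pi^2\lambda^2}{4}<0$; positivity here is marginal, not routine. The truncation route is worse: as the paper notes in \S\ref{definehypervec}, once negative eigenvalues of $\boldsymbol{W}$ are discarded the construction no longer yields hypervectors with covariance $\boldsymbol{K}$ (the columns of the truncated factor lose unit norm, so Grothendieck's identity no longer returns $\tfrac{2}{\pi}\arcsin\boldsymbol{W}$), Assumption~\ref{admissiblekernel} fails, and Theorem~\ref{thm1} can no longer be invoked at all; your claim that this perturbs $\boldsymbol{K}$ by a ``controllably small amount'' is unsubstantiated.

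The missing idea is to reverse the construction, which is what the paper does: rather than fixing $\boldsymbol{K}$ to be Laplace and checking $\boldsymbol{W}$, choose $\boldsymbol{W}$ to be \emph{exactly} the Gaussian kernel $\exp\left(-\tfrac{\pi^2}{8}\lambda|i-j|^2\right)$, which is positive semi-definite for every $\lambda>0$ by Schoenberg's theorem (Lemma~\ref{lemschoenberg}), and define $\boldsymbol{K}(i,j)=\tfrac{2}{\pi}\arcsin(\boldsymbol{W}(i,j))$ so that admissibility holds by construction for all bandwidths; Taylor expansion then gives $\boldsymbol{K}(i,j)=e^{-\lambda|i-j|}+\mathcal{O}(\varepsilon^2)$ and $S=\exp(-\lambda\|\boldsymbol{x}-\boldsymbol{y}\|_1)(1+\mathcal{O}(\varepsilon^2 d))$, which is Theorem~\ref{admissiblefamily} with $\alpha=1$. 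That inversion---deriving $\boldsymbol{K}$ from a kernel known to be positive semi-definite instead of testing positive semi-definiteness of a kernel derived from $\boldsymbol{K}$---is the step your proposal lacks.
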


For a precise version of the corollary, see  Theorem \ref{admissiblefamily} below. 
Interestingly, it is not possible to replace the Laplace kernel with the Gaussian kernel in this statement, see \S 
\ref{sec:admissiblekernel}

\subsection{Choosing an Admissible Kernel} \label{sec:admissiblekernel}

Recall that $\boldsymbol{K} \in \mathbb{R}^{m \times m}$ is admissible in the sense of Assumption \ref{admissiblekernel} if $\boldsymbol{K}(i,i) = 1$ and if $\boldsymbol{W}$ defined by
$$
\boldsymbol{W}(i,j) = \sin \left( \frac{\pi}{2} \boldsymbol{K}(i,j) \right)
$$
is positive semi-definite. In \S \ref{informalderivation}, we conduct an informal study of admissible kernels. Subsequently, in \S \ref{formalstatement}, we state a precise result that describes a family of admissible kernels $\boldsymbol{K}_\alpha$ and the resulting expected similarity ${\boldsymbol{S}}_\alpha(\boldsymbol{x},\boldsymbol{y})$.

\subsubsection{Heuristic Derivation} \label{informalderivation}
Given an admissible kernel $\boldsymbol{K}$, Theorem \ref{thm1} states that the resulting expected similarity $S(\boldsymbol{x},\boldsymbol{y})$ has the form 
$$
S(\boldsymbol{x},\boldsymbol{y}) = \prod_{i=1}^d  \boldsymbol{K}(\boldsymbol{x}(i),\boldsymbol{y}(i)).
$$
This formula suggests that the values of $\boldsymbol{K}(i,j)$ should all be relatively close to $1$ for all $i,j$ since, otherwise, taking the product will result in values close to zero; this observation motivates the ansatz   
$$
\boldsymbol{K}(i,j) =1 - \boldsymbol{F}(i,j), \quad
\text{where} \quad 0 \le \boldsymbol{F}(i,j) \le \varepsilon,
$$
for some small $\varepsilon > 0$. The series expansions
$$
\exp(x) = 1 + x + \mathcal{O}(x^2), \quad \text{and} \quad
\sin\left( \frac{\pi}{2}(1 - x) \right) = 1 - \frac{\pi^2 x^2}{8} + \mathcal{O}(x^4), 
$$
as $x \rightarrow 0$  imply that the resulting matrix $\boldsymbol{W}$ satisfies
\begin{equation*}
\boldsymbol{W}(i,j) = \sin \left( \frac{\pi}{2} \left( 1 -\boldsymbol{F}(i,j)  \right) \right) 
= \exp\left( - \frac{\pi^2}{8}  \boldsymbol{F}(i,j)^2 \right) + \mathcal{O}(\varepsilon^4) .
\end{equation*}
These calculations motivate choosing $\boldsymbol{F}(i,j)$ such that
$$
\boldsymbol{W}(i,j) \approx \exp\left( - \frac{\pi^2}{8}  \boldsymbol{F}(i,j)^2 \right)
$$
is positive semi-definite. One natural choice is setting $ \boldsymbol{F}(i,j) = \lambda |i - j|$ such that 
$\boldsymbol{K}$ is approximately equal to the Laplace kernel $\boldsymbol{K}(i,j) \approx \exp(-\lambda |i-j|)$
and $\boldsymbol{W}$ is approximately equal to the Gaussian kernel
$$
\boldsymbol{W}(i,j) \approx \exp \left( - |i -j|^2/(2\sigma^2) \right) ,
$$
where $\sigma = 2\lambda/\pi$.  Both the Laplace kernel and the Gaussian kernel are positive definite kernels, so the Laplace kernel $\boldsymbol{K}$ is one natural choice to use in this construction. 

Interestingly, choosing $ \boldsymbol{F}(i,j) = |i - j|^2/(2\sigma^2)$ such that 
$\boldsymbol{K}$ is approximately equal to the Gaussian kernel $\boldsymbol{K}(i,j) \approx \exp(-\lambda |i-j|^2/(2\sigma^2)$ results in $\boldsymbol{W}$ of the form
$$
\boldsymbol{W}(i,j) \approx \exp \left( - \gamma |i -j|^4 \right),
$$
for $\gamma = \pi^2/(32 \sigma^4)$,  which is not a positive definite kernel. Thus, the Gaussian kernel is not an admissible choice of $\boldsymbol{K}$ for this construction.

\subsubsection{Precise Description} \label{formalstatement}

In the following, we make the informal derivation of the previous section, which says the Laplace kernel is a natural choice for $\boldsymbol{K}$ more rigorous and general. In particular, we define a family of admissible kernels and derive the resulting expected similarity kernel. 

\begin{lemma} \label{lemschoenberg}
    Fix $\alpha, \lambda > 0$, and consider the matrix $\boldsymbol{W}_\alpha\in\mathbb{R}^{m\times m}$ with elements 
    \begin{equation}
       \boldsymbol{W}_\alpha(i,j) = \exp\left( -\lambda |i-j|^\alpha\right),\label{eq:expkernelclass} 
    \end{equation}
    for all $i,j = 1,\ldots,m$. Then,
    \begin{itemize}
    \item[--] For each $\alpha\in[0,2]$, $\mW_\alpha$ is  positive semi-definite 
    \item[--] For each $\alpha\in(2,\infty)$,  $\mW_\alpha$ is not positive semi-definite 
    \end{itemize}
\end{lemma}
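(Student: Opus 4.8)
The plan is to recognize this as the classical Schoenberg dichotomy for the family $g_\alpha(t) = e^{-\lambda |t|^\alpha}$ and to reduce positive semi-definiteness of the \emph{matrix} $\boldsymbol{W}_\alpha$ to positive definiteness of the \emph{function} $g_\alpha$ on $\mathbb{R}$. The bridge is Bochner's theorem: a continuous $g$ with $g(0)=1$ is a positive definite function on $\mathbb{R}$ (i.e.\ $[g(x_k-x_l)]_{k,l}$ is PSD for every finite set $x_1,\dots,x_n$) if and only if it is the Fourier transform of a probability measure. Since $\boldsymbol{W}_\alpha(i,j) = g_\alpha(i-j)$ is exactly such a matrix on the grid $x_i = i$, once $g_\alpha$ is known to be positive definite the matrix is automatically PSD, and conversely a violation of positive definiteness will be leveraged to exhibit a non-PSD matrix.

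For the first bullet ($\alpha \in [0,2]$) I would argue positive definiteness of $g_\alpha$ as follows. The endpoint $\alpha = 2$ is the Gaussian $e^{-\lambda t^2}$, whose Fourier transform is a scaled Gaussian, hence a positive measure, so $g_2$ is positive definite by Bochner. For $\alpha \in (0,2)$ I would invoke Lévy's theorem that $e^{-\lambda|t|^\alpha}$ is precisely the characteristic function of a symmetric $\alpha$-stable law, which exists exactly in this range, so $g_\alpha$ is again the Fourier transform of a probability measure. Equivalently, and cleaner to cite, one uses Schoenberg's theorem: $t\mapsto |t|^\alpha$ is conditionally negative definite for $\alpha\in[0,2]$ — visible from the Lévy--Khintchine representation $|t|^\alpha = c_\alpha \int_{\mathbb{R}} (1-\cos(t\xi)) |\xi|^{-1-\alpha}\, d\xi$ for $0<\alpha<2$ — and Schoenberg's correspondence then yields that $e^{-\lambda|t|^\alpha}$ is positive definite for every $\lambda>0$. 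The degenerate case $\alpha=0$ I would treat by hand: $\boldsymbol{W}_0$ is a rank-one perturbation of a multiple of the identity and is manifestly PSD.

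For the second bullet ($\alpha > 2$) the key is a moment/smoothness obstruction showing $g_\alpha$ cannot be a characteristic function. The expansion $g_\alpha(t) = 1 - \lambda|t|^\alpha + \mathcal{O}(|t|^{2\alpha})$ shows that for $\alpha>2$ the function $g_\alpha$ is $C^2$ near the origin with $g_\alpha'(0)=0$ and $g_\alpha''(0)=0$. Were $g_\alpha$ positive definite, it would be the characteristic function of a probability measure $\mu$ with $g_\alpha''(0) = -\int \xi^2\, d\mu(\xi)$; vanishing of $g_\alpha''(0)$ forces $\mu=\delta_0$ and hence $g_\alpha\equiv 1$, contradicting $g_\alpha(t)\to 0$. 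Thus $g_\alpha$ is not positive definite, so some configuration of points and coefficients makes the associated quadratic form negative.

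The main obstacle — and the step needing the most care — is converting ``$g_\alpha$ fails to be positive definite'' into ``the equally-spaced integer-grid matrix $\boldsymbol{W}_\alpha$ fails to be PSD,'' since failure of positive definiteness only produces \emph{some} real configuration, not the integer grid at the prescribed $\lambda$. I would resolve this in one of two equivalent ways. (i) Scaling and density: take the failing real configuration with rational coordinates and clear denominators; this yields an integer configuration at the price of replacing $\lambda$ by $\lambda' = \lambda/q^\alpha$, and since positive definiteness of $g_\alpha$ is independent of $\lambda>0$ (rescale $t$), one obtains a non-PSD principal submatrix inside $\boldsymbol{W}_\alpha$ for $m$ large. (ii) Toeplitz symbol: the $\boldsymbol{W}_\alpha$ are the principal submatrices of the Toeplitz operator with symbol $f(\theta) = \sum_{k\in\mathbb{Z}} e^{-\lambda |k|^\alpha} e^{ik\theta}$, and they are PSD for all $m$ iff $f\ge 0$; since $\lambda_{\min}(\boldsymbol{W}_\alpha)\to \min_\theta f(\theta)$ as $m\to\infty$ by Szeg\H{o}'s theory, it suffices to show $f$ dips below zero, the discrete counterpart of the $g_\alpha''(0)=0$ obstruction. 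In either route one should make explicit that the non-PSD conclusion is meant for $m$ sufficiently large, since for very small $m$ (e.g.\ $m=2$) the matrix is PSD for every $\alpha$.
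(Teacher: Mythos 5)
Your handling of the first bullet is sound and is essentially the paper's own argument: the paper simply cites Corollary 3 of Schoenberg \cite{schoenberg1938metric} for positive definiteness of $\exp(-|x|^\alpha)$ on $\mathbb{R}$ when $0<\alpha\le 2$ and then restricts to the test points $x_i=\lambda^{1/\alpha}i$; your Bochner/stable-law derivation just makes that citation self-contained, and restriction of a positive definite function to any finite point set does preserve positive semi-definiteness. You were also right to single out the grid-transfer step in the second bullet as the crux: failure of positive definiteness of the function only yields \emph{some} real configuration, and restricting to a prescribed lattice does not transport that failure. (The paper's own proof glosses over exactly this point, applying the same ``narrow down to $x_i=\lambda^{1/\alpha}i$'' step to both bullets, which is only legitimate for the positive one.)

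However, neither of your proposed repairs closes this gap, and in fact no repair can, because the negative bullet is false as stated once $\lambda$ is large. Route (i) fails because positive semi-definiteness of lattice matrices is not scale-invariant: passing to a sublattice shows that positive definiteness of the sequence at parameter $\lambda$ implies it at parameter $\lambda n^\alpha$ for $n\in\mathbb{N}$ (a sublattice yields a principal submatrix), so failure propagates from $\lambda n^\alpha$ \emph{down} to $\lambda$; your clearing-of-denominators construction instead produces failure at $\lambda/q^\alpha$, which is the wrong direction, and the ``rescale $t$'' freedom you invoke exists only for the function on $\mathbb{R}$, not on a fixed grid. Route (ii) is methodologically correct (Herglotz/Szeg\H{o}), but the needed fact is false: the symbol $f(\theta)=\sum_{k\in\mathbb{Z}}e^{-\lambda|k|^\alpha}e^{ik\theta}$ need not dip below zero. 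Indeed, for any $\alpha\ge 1$ and $\lambda>\ln 3$ one has $f(\theta)\ge 1-2\sum_{k\ge 1}e^{-\lambda k}>0$; equivalently and more simply, $\boldsymbol{W}_\alpha$ is then strictly diagonally dominant with unit diagonal, hence positive definite for \emph{every} $m$, including all $\alpha>2$. So the second assertion of Lemma \ref{lemschoenberg} can only hold with a smallness hypothesis on $\lambda$ (together with ``$m$ sufficiently large,'' as you noted): for small $\lambda$ your continuity idea works directly---approximate Schoenberg's failing real configuration by points of the fine lattice $\lambda^{1/\alpha}\mathbb{Z}$, with no rescaling---but for $\lambda>\ln 3$ the conclusion is simply wrong. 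This defect is inherited from the paper, whose proof of the negative bullet is incomplete for the same reason; it does not affect the main results, since Theorem \ref{admissiblefamily} only invokes the positive bullet (exponents $2\alpha\le 2$), but your proof cannot be completed as written.
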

\begin{proof}[Proof of Lemma \ref{lemschoenberg}]
    This is a direct implication of Schoenberg's classic result. Specifically, Corollary 3 of Schoenberg \cite{schoenberg1938metric} states that $\exp\left(-|x|^\alpha\right)$ is positive definite when $0<\alpha\leq 2$, and fails to become positive definite when $\alpha>2$. This function being positive definite implies that for any selection of $x_1,\ldots,x_m\in\mathbb{R}$, the matrix with elements $\exp\left(-|x_i-x_j|^\alpha\right)$ is positive semi-definite. Narrowing down the choice of the test points to $x_i=\lambda^{1/\alpha}i$ for $i =1,\ldots,m$ validates the claims of the lemma. 
\end{proof}

The following theorem defines a family of admissible kernels and, motivated by the informal derivation of \S \ref{informalderivation}, derives the resulting expected similarity kernel. A subsequent remark describes the connection to the Laplace kernel derived in the previous section.

\begin{theorem} \label{admissiblefamily} Let $\lambda > 0$ and $\alpha \in (0,1]$. Define
$\boldsymbol{K}_\alpha \in \mathbb{R}^{m \times m}$ by
\begin{equation} \label{admissiblekerneleq}
\boldsymbol{K}_\alpha(i,j) = \frac{2}{\pi} \arcsin \left( \exp \left( - \frac{\pi^2}{8} \lambda |i - j|^{2 \alpha} \right) \right),
\end{equation}
for $i,j \in \{1,\ldots,m\}$. Then, $\boldsymbol{K}_\alpha$ is admissible in the sense of Assumption \ref{admissiblekernel}. Moreover, if the bandwidth parameter $\lambda$ is set such that  $\lambda |i -j|^\alpha \le \varepsilon$, then 
$$
\boldsymbol{K}_\alpha(i,j) = \exp \left( - \lambda |i -j|^\alpha \right) + \mathcal{O}(\varepsilon^2),
$$
and the resulting expected similarity kernel $S_\alpha$ satisfies
$$
S_\alpha(\boldsymbol{x},\boldsymbol{y})  =  \exp( -\lambda \|\boldsymbol{x} - \boldsymbol{y}\|_\alpha^\alpha )(1 + \mathcal{O}(\varepsilon^2 d)),
$$
as $\varepsilon \rightarrow 0$.
\end{theorem}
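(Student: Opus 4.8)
The plan is to decompose the statement into three parts and prove them in sequence: (i) that $\boldsymbol{K}_\alpha$ is admissible, (ii) the pointwise estimate $\boldsymbol{K}_\alpha(i,j) = \exp(-\lambda|i-j|^\alpha) + \mathcal{O}(\varepsilon^2)$, and (iii) the assembly of the similarity kernel $S_\alpha$ through Theorem \ref{thm1}. For part (i) the crucial observation is that the $\arcsin$ in \eqref{admissiblekerneleq} is engineered to cancel the $\sin$ in the definition \eqref{eqWdef} of $\boldsymbol{W}$. Substituting $\boldsymbol{K}_\alpha$ into $\boldsymbol{W}_\alpha(i,j) = \sin(\tfrac{\pi}{2}\boldsymbol{K}_\alpha(i,j))$ and using $\sin(\arcsin t) = t$ yields the \emph{exact} identity
$$\boldsymbol{W}_\alpha(i,j) = \exp\left(-\frac{\pi^2}{8}\lambda|i-j|^{2\alpha}\right),$$
which is precisely a matrix of the Schoenberg form treated in Lemma \ref{lemschoenberg}, now with exponent $2\alpha$ and bandwidth $\tfrac{\pi^2}{8}\lambda$. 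Since $\alpha \in (0,1]$ forces $2\alpha \in (0,2]$, Lemma \ref{lemschoenberg} gives that $\boldsymbol{W}_\alpha$ is positive semi-definite; combined with $\boldsymbol{K}_\alpha(i,i) = \tfrac{2}{\pi}\arcsin(1) = 1$, this verifies Assumption \ref{admissiblekernel}.

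For part (ii) I would Taylor expand $\boldsymbol{K}_\alpha(i,j) = \tfrac{2}{\pi}\arcsin(E)$ with $E = \boldsymbol{W}_\alpha(i,j)$ as above, around $E = 1$. Writing $\beta := \lambda|i-j|^\alpha \le \varepsilon$, the exponent of $E$ is of order $\beta^2$, so $1 - E = \mathcal{O}(\beta^2)$. The delicate point is that $\arcsin$ has a vertical tangent at $1$, so a naive Taylor series is invalid; instead one uses $\arcsin(E) = \tfrac{\pi}{2} - \arccos(E)$ together with $\arccos(1-w) = \sqrt{2w}\,(1 + \mathcal{O}(w))$ as $w \to 0^+$. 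The square root converts the argument, which is quadratic in $|i-j|^\alpha$, into a term \emph{linear} in $|i-j|^\alpha$, and one checks that the constant $\tfrac{\pi^2}{8}$ in \eqref{admissiblekerneleq} is chosen precisely so that this linear term equals $\beta$. Carrying the expansion one order further shows $\boldsymbol{K}_\alpha(i,j) = 1 - \beta + \mathcal{O}(\beta^3)$, with no quadratic term; comparing against $\exp(-\beta) = 1 - \beta + \tfrac{\beta^2}{2} + \mathcal{O}(\beta^3)$ gives $\boldsymbol{K}_\alpha(i,j) - \exp(-\beta) = \mathcal{O}(\beta^2) = \mathcal{O}(\varepsilon^2)$.

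For part (iii), Theorem \ref{thm1} applies since $\mathcal{P}$ is trace-orthogonal (Assumption \ref{assumenooverlap}) and $\boldsymbol{K}_\alpha$ is admissible by part (i), giving $S_\alpha(\boldsymbol{x},\boldsymbol{y}) = \prod_{i=1}^d \boldsymbol{K}_\alpha(\boldsymbol{x}(i),\boldsymbol{y}(i))$. Under the hypothesis $\lambda|i-j|^\alpha \le \varepsilon$ each factor $\exp(-\lambda|\boldsymbol{x}(i)-\boldsymbol{y}(i)|^\alpha)$ lies in $[e^{-\varepsilon},1]$ and is therefore bounded away from $0$, so the additive $\mathcal{O}(\varepsilon^2)$ error from part (ii) can be absorbed multiplicatively,
$$\boldsymbol{K}_\alpha(\boldsymbol{x}(i),\boldsymbol{y}(i)) = \exp\left(-\lambda|\boldsymbol{x}(i)-\boldsymbol{y}(i)|^\alpha\right)\left(1 + \mathcal{O}(\varepsilon^2)\right).$$
Taking the product turns the sum of exponents into $\lambda\sum_{i=1}^d |\boldsymbol{x}(i)-\boldsymbol{y}(i)|^\alpha = \lambda\|\boldsymbol{x}-\boldsymbol{y}\|_\alpha^\alpha$, and $\prod_{i=1}^d(1 + \mathcal{O}(\varepsilon^2)) = 1 + \mathcal{O}(\varepsilon^2 d)$ as $\varepsilon \to 0$ (bounding $|\prod(1+r_i) - 1| \le e^{\sum|r_i|} - 1$ with $\sum|r_i| = \mathcal{O}(d\varepsilon^2)$), which produces the stated expression for $S_\alpha$.

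I expect the main obstacle to be the expansion in part (ii). Because $\arcsin$ is singular at its endpoint, the whole estimate hinges on tracking the $\arccos$ square-root behavior carefully enough to confirm two things simultaneously: that the leading correction to $\boldsymbol{K}_\alpha$ is exactly $-\beta$ (so that the normalizing constant $\tfrac{\pi^2}{8}$ reproduces the correct Laplace rate), and that the next-order term is $\mathcal{O}(\beta^3)$ rather than $\mathcal{O}(\beta^2)$. The latter is precisely what upgrades the final error from $\mathcal{O}(\varepsilon)$ to $\mathcal{O}(\varepsilon^2)$, and it depends on the vanishing of the quadratic coefficient in the expansion of $\tfrac{2}{\pi}\arcsin(E)$; verifying this cancellation cleanly, rather than merely to leading order, is the technical heart of the argument.
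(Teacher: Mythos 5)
Your proposal is correct and follows essentially the same route as the paper's proof: admissibility via the exact $\sin$--$\arcsin$ cancellation and Lemma \ref{lemschoenberg} with exponent $2\alpha\in(0,2]$, the endpoint expansion of $\tfrac{2}{\pi}\arcsin(\exp(-x^2))$ with vanishing quadratic term (which the paper states as a fact and you derive through $\arccos$), and assembly through Theorem \ref{thm1} with the additive error absorbed multiplicatively (the paper's $\exp\left(\sum_i \ln \boldsymbol{K}_\alpha\right)$ device is the same estimate in logarithmic form). One shared caveat: with \eqref{admissiblekerneleq} exactly as written, the linear coefficient works out to $\sqrt{\lambda}\,|i-j|^\alpha$ rather than $\lambda|i-j|^\alpha$, so both your argument and the paper's implicitly use the intended definition with $\lambda^2$ inside the exponential --- a rescaling typo in the statement rather than a gap in either proof.
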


The proof of Theorem \ref{admissiblefamily} is given in \S \ref{proofthm2}. In the following, we make three remarks about this result.

\begin{remark}[Laplace kernel]
When $\alpha =1$
$$
\boldsymbol{K}_\alpha(i,j) = \exp(-\lambda|i-j|) + \mathcal{O}(\varepsilon^2),
$$
is the Laplace kernel, up to lower order terms, and 
$$
\boldsymbol{S}_\alpha(i,j) = \exp(-\lambda\|\boldsymbol{x} - \boldsymbol{y}\|_1)(1 + \mathcal{O}(\varepsilon^2 d)),
$$
which recovers the informal motivating result from the previous section.
\end{remark}

\begin{remark}[Limitation of similarity structure] \label{limitation}
The result of Theorem \ref{thm1}  suggests a limitation of binary HDC pipelines that involve binding data $\boldsymbol{x} \mapsto \boldsymbol{\psi}_{\boldsymbol{x}}$ and then relying on the similarity structure induced by inner products $\boldsymbol{\psi}_{\boldsymbol{x}}^\top \boldsymbol{\psi}_{\boldsymbol{y}}/N$. Namely, the expected similarity $S(\boldsymbol{x},\boldsymbol{y})$ is invariant to global permutations of the elements of the data, that is,
$$
S(\boldsymbol{x},\boldsymbol{y}) = S(\boldsymbol{x} \circ \sigma,\boldsymbol{y} \circ \sigma),
$$
where $(\boldsymbol{x} \circ \sigma)(i) = \boldsymbol{x}(\sigma(i))$ for a permutation $\sigma$, see \eqref{thm1sxy}. For some data types, such as images, the ordering of the data elements may contain information. For example, a group of black pixels may indicate structure, such as a digit, while scattered black pixels may be noise. These spatial relationships can be captured using feature extraction methods such as convolutional filters, see \S \ref{sec:convfeatures}. Alternatively, the definition of the embedding could be modified so that spatial information is encoded via another mechanism such as translation-equivariance, see \S \ref{sec:translationequivar}
\end{remark}

\begin{remark}[Setting the bandwidth parameter $\lambda$] \label{rem:settingbandwidth}

When $\boldsymbol{K}_\alpha$ is defined by \eqref{admissiblekerneleq},
the form of the resulting expected similarity 
$$
S_\alpha(\boldsymbol{x},\boldsymbol{y})  \approx \exp( -\lambda \|\boldsymbol{x} - \boldsymbol{y}\|_\alpha^\alpha ),
$$
can be used to set the bandwidth parameter $\lambda$. Suppose a data set $\mathcal{X} = \{ \boldsymbol{x}_i \}_{i=1}^n$ is given. A typical strategy for kernel methods is setting the bandwidth parameter so that $\boldsymbol{K}_\alpha$ is invariant to transformations that preserve distances $\|\boldsymbol{x} - \boldsymbol{y}\|_\alpha$ up to a global constant (for example, global scaling of the data). One way to achieve this is by setting
\begin{equation} \label{bandwidthset}
\lambda = \frac{c}{ \text{median}(\boldsymbol{D})},
\end{equation}
for some constant $c > 0$, where
$$
\boldsymbol{D}(i,j) = \|\boldsymbol{x}_i - \boldsymbol{x}_j\|_\alpha^\alpha,
$$
and $\text{median}(\boldsymbol{D})$ is the median of the $n^2$ numbers in the matrix $\boldsymbol{D}$. 
When the number of data points $n$ is large, the matrix $\boldsymbol{D}$ in  \eqref{bandwidthset} could be replaced by a matrix $\boldsymbol{D}'$ corresponding to a subset of the data to reduce the required computation.
\end{remark}

\subsection{Translation Equivariant Encoding}
\label{sec:translationequivar}
Remark \ref{limitation} describes a limitation of the method related to encoding spatial information.
In the following, we describe an alternate way to encode spatial information by defining a translation equivariant binding operation for images. Suppose that $\boldsymbol{x} \in \{1,\ldots,m\}^{L \times L}$ is an $L \times L$ image. For simplicity, initially assume $N:= L^2$ (We will subsequently relax this assumption to handle $N > L^2$).
 We define a family of permutations
$
\mathcal{P} = \left\{\boldsymbol{T}_{i, j} : i, j = 1, ...,\, m \right\},
$
which act on $\boldsymbol v \in \{-1,+1\}^{L \times L}$ by
$$
\left( \boldsymbol{T}_{i, j}\boldsymbol v \right)(i',j') = \boldsymbol v (i+ i', j+j'),
$$
where the addition $i+i'$ and $j+j'$ is taken modulo $L$. This family satisfies the Trace-Orthogonality Assumption \ref{assumenooverlap}.  Define the embedding
\begin{equation}\label{2dencoding:scheme}
\boldsymbol x\mapsto \boldsymbol{\psi}_{\boldsymbol{x}} = \bigodot_{i,j=1}^{L} \boldsymbol{T}_{i, j} \boldsymbol v_{\boldsymbol x(i, j)},
\end{equation}
where $\boldsymbol x(i, j)$ denotes the $(i, j)$-th entry of $\boldsymbol{x}$. By construction, this embedding is translation-equivariant. That is, 
$$
\boldsymbol T_{i,j} \boldsymbol{\psi}_{\boldsymbol{x}} = \boldsymbol{\psi}_{\boldsymbol T_{i,j} \boldsymbol x},
$$
see Figure \ref{fig-equi}. 
\begin{figure}[ht!]
\centering
\begin{tikzcd}[column sep=1cm, row sep=0.5cm, every label/.append style={font=\large}]
\text{\textbf{Image space}} & \text{\textbf{Embedding space}} \\[-0.5cm]
\raisebox{-.5\height}{\includegraphics[width=.1\textwidth]{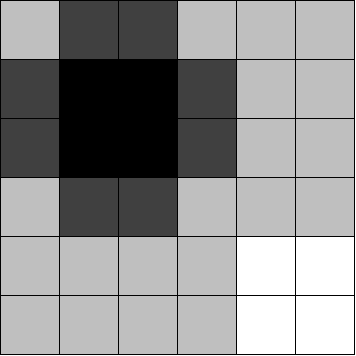}}
 \arrow[r, "\embed"] \arrow[d, "\boldsymbol{T}_{i,j}"] 
 &
 {\raisebox{-.5\height}{\includegraphics[width=.1\textwidth]{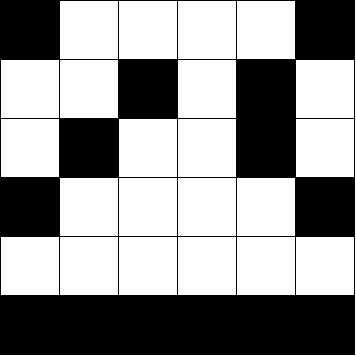}}}  
\arrow[d, "\boldsymbol{T}_{i,j}"] \\
\raisebox{-.5\height}{\includegraphics[width=.1\textwidth]{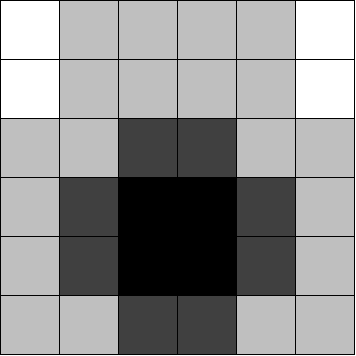}}
 \arrow[r, "\embed"] &
 {\raisebox{-.5\height}{\includegraphics[width=.1\textwidth]{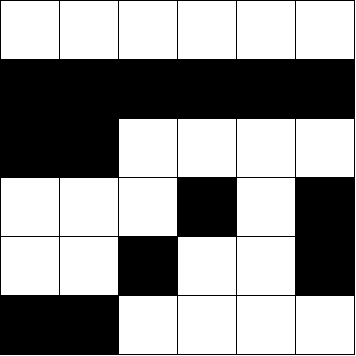}}}
\end{tikzcd}
\caption{Commutative diagram for translation-equivariance of binding operation  \eqref{2dencoding:scheme}.}
\label{fig-equi}
\end{figure}

The equivariant binding operation for the case $N = L^2$ can be extended to $N > L^2$ while maintaining exact translation-equivariance by using a block-based construction when $N = M L^2$ by taking $M$ copies of the $L \times L$ construction. More precisely, we define the 2D-Block family of permutations matrices $\{ \boldsymbol{T}_{i,j}^\text{2D-Block} : i,j \in \{1,\ldots,L\}\}$ which act on $\boldsymbol{v} \in \{-1,+1\}^{L \times L \times M}$ by
\begin{equation} \label{2dblockeq}
\left( \boldsymbol{T}_{i,j}^\text{2D-Block} \boldsymbol{v} \right)(i',j',k) = \boldsymbol v (i+ i', j+j',k),
\quad \text{for} \quad (i',j',k) \in \{1,\ldots,L\}^2 \times \{1,\ldots,M\},
\end{equation}
where the addition $i+i'$ and $j+j'$ is taken modulo $L$. Alternatively, one can consider $N = M^2 > L^2$ and define $\{ \boldsymbol{T}_{i,j}^\text{2D-Cyclic} : i,j \in \{1,\ldots,L\}\}$
\begin{equation} \label{2dcyclicshifteq}
\left( \boldsymbol{T}_{i, j}^\text{2D-Cyclic} \boldsymbol v \right)(i',j') = \boldsymbol v (i+ i', j+j'), \quad \text{for} \, (i',j') \in \{1,\ldots,M\}^2 ,
\end{equation}
where addition $i+i'$ and $j+j'$ is taken modulo $M$. When the 2D-Cyclic family of permutations is used in the binding operation, the resulting encoding is translation-equivariant when ignoring boundary effects. We demonstrate an application of the 2D-Cyclic family of permutations to visualize this equivariance property \S \ref{sec:experimentequi}.

We note that translation-equivariance has been considered in the context of Hyperdimensional computing by \cite{Rachkovskij2022,Rachkovskij2024}. In these works, a theory of an equivariant encoding scheme is developed, but no concrete examples of such a family of equivariance-permitting matrices are produced. Our work extends these results by constructing a concrete example of such a family and developing visualizations that are possible due to the similarity structure imposed on the hypervectors used in the binding operation.

\begin{remark}[Connections of constructions to neural networks]
We note two connections of the binary HDC schemes we consider to neural networks. First, the Laplace kernel, which naturally arose in our binary HDC construction, also has connections to Neural Tangent Kernels (NTKs) \cite{geifman2020similarity,chen2020deep}. Second, we note that neural networks with $+1$ and $-1$ weights and activations have been considered by several authors; see, for example, Courbariaux \emph{et al. }\cite{courbariaux2016binarized} and Kim \cite{kim2016bitwise}.
\end{remark}

\section{Experiments}

We present several numerical experiments on the binary HDC schemes described in this paper and their limitations and extensions. Code for all presented methods is publically available at: 
$$
\text{\url{https://github.com/HDStat/Laplace-HDC}}
$$
This section is organized as follows. 
In \S \ref{classmethods}, we describe the linear classifiers that we use on the binary encodings. In \S \ref{sec:vanilla}, we present results for the vanilla version of Laplace-HDC. In \S \ref{svdfeatures}, we present results for Laplace-HDC using singular value decomposition (SVD) features. In \S \ref{sec:convfeatures}, we present results for Laplace-HDC with Haar convolutional features. In \S \ref{sec:comp}, we provide a comparison of Laplace-HDC to other methods. Beyond the model accuracy, we study robustness in \S \ref{robustness}, and conduct experiments concerning the translation-equivariance property of the proposed encoding scheme in \S \ref{sec:experimentequi}.

\subsection{Classification Methods}\label{classmethods}
Once the data $\boldsymbol x\in \mathcal X$ has been embedded $\boldsymbol{x} \mapsto \boldsymbol{\psi}_{\boldsymbol{x}}$, a classifier may be trained.
Each classifier we consider uses an inner product to determine the final class.
Suppose the classes $\{1,2,\ldots, c\}$ are represented within $\mathcal X$, and suppose $\embed_i$ denotes the class representative for class $i$, for $i = 1,2,\ldots, c$ .
We determine the class  of $\boldsymbol y$ by a simple linear classifier
\begin{equation} \label{argmaxclassvec}
\mathrm{class}(\boldsymbol y) = \argmax_{i=1,...,c}\left(\embed_{\boldsymbol y}^\top\embed_{i}\right).
\end{equation}
Below, we detail four methods we use for determining the class representatives $\{\embed_{i}\}$.

\subsubsection{Float and Binary Majority Vote}\label{class:majvote}
The majority vote classifiers operate on a simple principle for classification. First, we describe the binary flavor of this method. Consider the collection
$$
C_i = \{\boldsymbol x\in \mathcal X \,\colon \text{class}(\boldsymbol x)=i\},
$$
and let $\#(C_i)$ denote the number of elements in the collection.
The representative for class $i$, denoted $\embed_{i}$, 
is determined by a majority vote of $\embed_{\boldsymbol{x}}$ for 
$\boldsymbol x\in C_i$. More precisely, we define the Binary Majority Vote representative $\embed_{i}$  by
$$
\embed_{i}(k) = \begin{cases}+1 & \sum_{\boldsymbol x \in C_i} \boldsymbol{\psi}_{\boldsymbol{x}}(k) > 0 \\ -1 &\text{otherwise,}\end{cases}
$$
The Float Majority Vote is largely the same, but we relax the condition that $\embed_i$ must have entries in ${-1, +1}$. Instead, the entries of $\embed_i$ may be floating-point numbers. In this case, the representative $\embed_i$ is determined entrywise by class averages
$$
\embed_i(k) = \frac {1}{\#(C_i)}\sum_{\boldsymbol x \in C_i} \boldsymbol{\psi}_{\boldsymbol x}(k).
$$

\subsubsection{Float and Binary SGD}\label{class:sgd}
We define two classifiers, which we call Float SGD and Binary SGD. The Float SGD classifier determines the class representatives $\embed_i$ by optimizing a cross-entropy loss function using stochastic gradient descent; more precisely, we use Adam \cite{kingma2017adam} with a learning rate parameter $\alpha=0.01$, where the model takes $\boldsymbol{\psi}_{\boldsymbol{x}}$ and outputs one of the $c$ classes. 
We perform $3$ epochs training passes over the data in $\mathcal X$ in all experiments. 
The Binary HDC classifier operates in the same manner, except during each training epoch, the weights of the model parameter are clamped to be in the range $[-1, 1]$. Once all training epochs have been completed; the $\sign$ function is applied to the model weights, making all negative weights $-1$ and all positive weights $+1$.

\subsection{Laplace-HDC in its Basic Form} \label{sec:vanilla}
In this section, we define
a Binary HDC scheme motivated by Theorem \ref{thm1}, Corollary \ref{cor1}, and Theorem \ref{admissiblefamily}, which we call Laplace-HDC. Given a data set $\mathcal{X} = \{\boldsymbol{x}_1,\ldots,\boldsymbol{x}_n\}$ where $\boldsymbol{x}_i \in \{1,\ldots,m\}^d$, Laplace-HDC consists of the following steps:

\begin{enumerate}[label=\arabic*.]
\item Set the bandwidth parameter $\lambda > 0$ using
Remark \ref{rem:settingbandwidth}
with $\alpha =1$.
\item Define the kernel $\boldsymbol{K} \in \mathbb{R}^{m \times m}$ using 
\eqref{admissiblekerneleq}.
\item Construct hypervectors $\boldsymbol{v}_1,\ldots,\boldsymbol{v}_m \in \{-1,+1\}^N$ using the method detailed in \S \ref{definehypervec}.
\item Choose a family of permutation matrices $\mathcal{P} = \{\boldsymbol{\Pi}_1,\ldots,\boldsymbol{\Pi}_d\}$ that are Trace-Orthogonal in the sense of Assumption \ref{assumenooverlap}.
\item Encode each $\boldsymbol{x}_j \mapsto \embed_{\boldsymbol{x}_j}$ using the binding operation \eqref{generalencoding:scheme2}.
\item Return $\{\boldsymbol{\psi}_{\boldsymbol{x}_1},
\ldots,\boldsymbol{\psi}_{\boldsymbol{x}_n}\}$ 
where $\boldsymbol{\psi}_{\boldsymbol{x}_i} \in \{-1,+1\}^N$.
\end{enumerate}
By construction, it follows from Theorem \ref{admissiblefamily} that
$$
\mathbb{E} \frac{\boldsymbol{\psi}_{\boldsymbol{x}_i}^\top \boldsymbol{\psi}_{\boldsymbol{x}_j}}{N} 
\approx \exp( -\lambda \|\boldsymbol{x}_i - \boldsymbol{x}_j\|_1 ).
$$
In the following, we demonstrate the utility of Laplace-HDC in an application to image classification. We consider four choices of trace-orthogonal families of permutation matrices: 
1D-Cyclic shift, 1D-Block Cyclic, 2D-Block Cyclic, and 2D-Cyclic Shift, which are defined in 
\eqref{cyclicshiftseq}, \eqref{1dblockeq}, \eqref{2dblockeq}, and \eqref{2dcyclicshifteq}, respectively. Classification is performed by determining which class representative $\boldsymbol{\psi}_i$ gives the largest inner product \eqref{argmaxclassvec}. The class representatives $\boldsymbol{\psi}_i$ are determined by two different methods: Float SGD and Binary SGD, which are defined in \S \ref{class:sgd}.

In each case, we use the largest possible hyperdimensionality $N \le 10^4$. For example, the 1D-Block family of permutations requires that $N = d M$ for some positive integer $M$. The image data we consider has dimension $d = 28^2$, so we choose $M = \lfloor 10^4/28^2 \rfloor =12$, which results in $N = 9408$. When setting the bandwidth parameter $\lambda > 0$, we use \eqref{bandwidthset} with $c=1$ except for binary SGD where  $c=4$ provides better performance, and we estimate the median of the $\ell_1$-distances of the data using $1000$ samples selected uniformly at random from $\mathcal X$.  The accuracy for each is presented as the mean $\pm$ one standard deviation in all cases; see Table \ref{tab:vanilla}.

\begin{table}[ht!]
\caption{Basic Laplace-HDC Performance}

\begin{center}
\begin{NiceTabular}[corners=NW,hvlines]{ccccc}
\CodeBefore
  \rowcolor{gray!20}{1}
  \columncolor{gray!20}{1}
  \rowcolors{2}{gray!0}{gray!0}[cols={2,3,4,5,6}]
\Body
               & 1D-Cyclic & 2D-Cyclic &1D-Block  & 2D-Block \\
\!\!\!Float SGD - Fashion MNIST\!\!\! &   $86.06\pm 0.74$ & $86.03\pm 0.73$ & $87.86\pm 0.40$  & $87.41\pm 0.15$   \\
\!\!\!Binary SGD - Fashion MNIST\!\!\! &   $83.60\pm 1.35$& $83.87\pm 0.63$ &$84.72\pm 0.23$ & $83.51\pm 0.57$     \\
\!\!\!Float SGD - MNIST\!\!\! & $95.46\pm0.48$ & $95.46\pm 0.51$ & $96.13 \pm 0.25$ & $96.14 \pm 0.28$\\
\!\!\!Binary SGD - MNIST\!\!\! & $93.25\pm 1.09$ & $93.37\pm 1.00$ & $94.43 \pm 0.82$ & $94.59 \pm 0.44$\\
\end{NiceTabular}
\end{center}\label{tab:vanilla}
\end{table}

\subsection{Laplace-HDC with SVD Features} \label{svdfeatures}
Singular value decomposition is a popular pre-processing tool in predictive tasks involving numerical features. Well-known techniques such as principal component regression use this decomposition to rotate the coordinate system so that the features are uncorrelated in the rotated system. The process normally involves populating the numerical features into a matrix $\boldsymbol{X}$ (where the number of columns corresponds to the number of features and the number of rows corresponds to the number of samples) and then performing a compact SVD of the form  $\boldsymbol{X} = \boldsymbol{U}\boldsymbol{\Sigma}\boldsymbol{V}^\top$ to acquire the transformation matrix. The new feature matrix takes the form $\hat{\boldsymbol{X}} = \boldsymbol{X}\boldsymbol{V}$, which not only enjoys uncorrelated features, also may have fewer columns than those in $\boldsymbol{X}$ thanks to the compact nature of the SVD operation (instead of a full SVD). Truncating small singular values is a manual alternative to reduce the number of features, which is also capable of denoising the feature matrix. This technique is incorporated into the HDC pipeline by first mapping the data matrix $\boldsymbol{X}$ to  $\hat{\boldsymbol{X}}$ and then performing the hyperdimensional encoding on $\hat{\boldsymbol{X}}$.

To evaluate the effects of SVD transformation, we applied it to the FashionMNIST data and used a total of 8 variations of permutation schemes and classifier pairs.  The mean test accuracy (in percentage) of different classifiers, plus and minus one standard deviation, is computed for 50 independent experiments after SVD preprocessing and different permutation schemes. The results are available in Table \ref{tab:svd}. The hyperparameters $\lambda$ and $N$ are set in the same way described in \S \ref{sec:vanilla}.

\begin{table}[ht!]
\caption{Laplace-HDC with SVD features}

\begin{center}
\begin{NiceTabular}[corners=NW,hvlines]{ccc}
\CodeBefore
  \rowcolor{gray!20}{1}
  \columncolor{gray!20}{1}
  \rowcolors{2}{gray!0}{gray!0}[cols={2,3,4,5,6}]
\Body
               &1D-Cyclic  & 1D-Block \\
Float SGD - Fashion MNIST &    $87.26\pm 0.16$  & $86.67\pm 0.44$   \\
Binary SGD - Fashion MNIST &    $84.40\pm 0.42$ & $83.57\pm 1.13$     \\
Float SGD - MNIST & $95.63 \pm 0.30$ & $96.15 \pm 0.30$\\
Binary SGD - MNIST & $94.03 \pm 0.35$ & $94.66 \pm 0.39$\\

\end{NiceTabular}
\end{center}\label{tab:svd}
\end{table}

\subsection{Laplace-HDC with Haar Convolutional Features} \label{sec:convfeatures}

The result of 
Theorem \ref{thm1}, Corollary \ref{cor1}, and Theorem \ref{admissiblefamily} suggest a limitation of using the inner product similarity structure of HDC encodings when applied to images: the spatial relationship between pixels is lost. In images, the spatial relation between pixels contains meaningful information. 
For example, a group of black pixels may indicate structure, such as a digit, while scattered black pixels may be noise. Note that the $\ell_1$-norm is invariant to permutations
$$
\|\boldsymbol{x}\|_1 = \sum_{i=1}^d |\boldsymbol{x}(i)| = \sum_{i=1}^d |\boldsymbol{x}(\sigma(i))| = \|\boldsymbol{x} \circ \sigma\|_1,
$$
for any fixed permutation $\sigma$ of $\{1,\ldots,d\}$. Thus, by Theorem \ref{thm1}, the expected similarity structure of the HDC embedding is invariant to a global permutation of the pixels of all the images. One basic way to encode spatial information is to use convolutional features.  As a basic demonstration, we consider the $9$ Haar wavelet matrices of dimension $4 \times 4$; see Figure \ref{figconv}.

\begin{figure}[ht!]
\centering
\includegraphics[width=.45\textwidth,trim={3cm 1.5cm 3cm 0.5cm}]{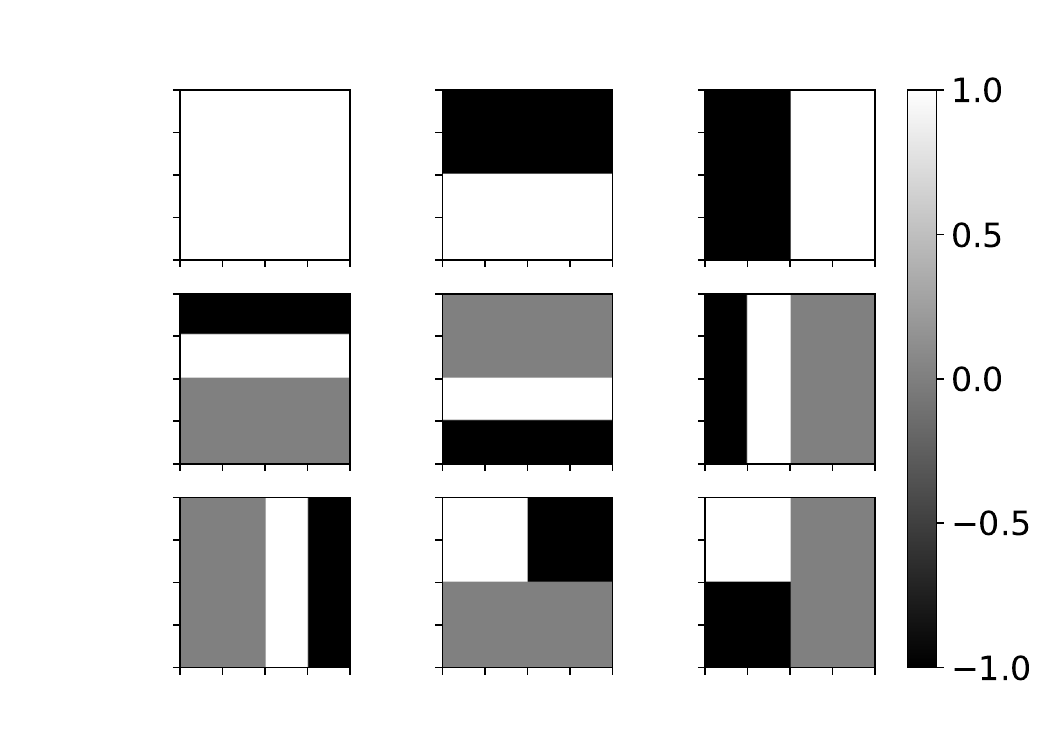}
\caption{Collection of $9$ Haar convolution matrices
 of dimension $4 \times 4$.} \label{figconv}
\end{figure}

Convolving these $9$ filters of dimension $4 \times 4$  with an $L \times L$ image with stride $s$ creates 
$$
n = 9 ((L - 4)/s + 1)^2
$$
convolutional Haar features, where the stride is the amount of each filter is shifted in each direction when convolving over the data. Each feature coordinate is mapped to the interval $[0,255]$ by an affine transformation (determined from the training data), which is rounded to an integer in $\{0,1,\ldots,255\}$. 
These integer features are used in the same Laplce-HDC methodology described in \S \ref{sec:vanilla}; the results are reported in  Table \ref{tab:conv}.

\begin{table}[ht!]
\caption{Laplace-HDC with Haar convolutional features}

\begin{center}
\begin{NiceTabular}[corners=NW,hvlines]{ccc}
\CodeBefore
  \rowcolor{gray!20}{1}
  \columncolor{gray!20}{1}
  \rowcolors{2}{gray!0}{gray!0}[cols={2,3,4,5,6}]
\Body
               &1D-Cyclic  & 1D-Block \\
Float SGD - Fashion MNIST &    $88.67\pm 0.30$  & $87.86\pm 0.40$   \\
Binary SGD - Fashion MNIST &    $86.65\pm 0.36$ & $85.63\pm 0.56$     \\
Float SGD - MNIST & $96.40 \pm 0.28$ & $96.22 \pm 0.29$\\
Binary SGD - MNIST & $95.17 \pm 0.44$ & $94.85 \pm 0.50$\\

\end{NiceTabular}
\end{center}\label{tab:conv}
\end{table}

More generally, using features from a trained convolutional neural network is possible and would improve the accuracy even further; see \S \ref{discussion} for further discussion.

\subsection{Comparison to Other Methods} \label{sec:comp}

In this section, we implement our proposed framework and compare it to some of the relevant works in the literature, such as  RFF-HDC, OnlineHD, and (Extended) HoloGN. In the sequel, we first briefly overview each of these methods and then report their performance on standard datasets such as MNIST and FashionMNIST. To present the results in a reliable format, the experiments are performed 50 times, and mean accuracies along with standard deviation of the accuracies and the histograms are reported in Figure \ref{figRealData_all} and Table \ref{tab:comp}. 

\paragraph{\textbf{RFF-HDC}}
This method, presented by Yu \emph{et al.} \cite{yu2022understanding}
 is the most relevant baseline, as it is the basis for our work. While traditional hyperdimensional computing methods (here referred to as Vanilla HDC) are fast, they suffer from low prediction accuracy. RFF-HDC utilizes similarity matrices in a similar way to Random Fourier Features (RFF) to construct the hypervectors, which helps outperform the state-of-the-art HDC schemes. A pseudo-code that constructs the hypervectors for RFF-HDC 
was given in \S \ref{sec:motivation}. Note that all the floating point operations are performed during the construction of the hypervectors or learning the models, and ultimately, the models are fully represented and operated in binary mode for inference.

\begin{figure*}[!ht]\centering 
\begin{overpic}[trim={025 -.25cm  0 0},clip,height=1.4in]{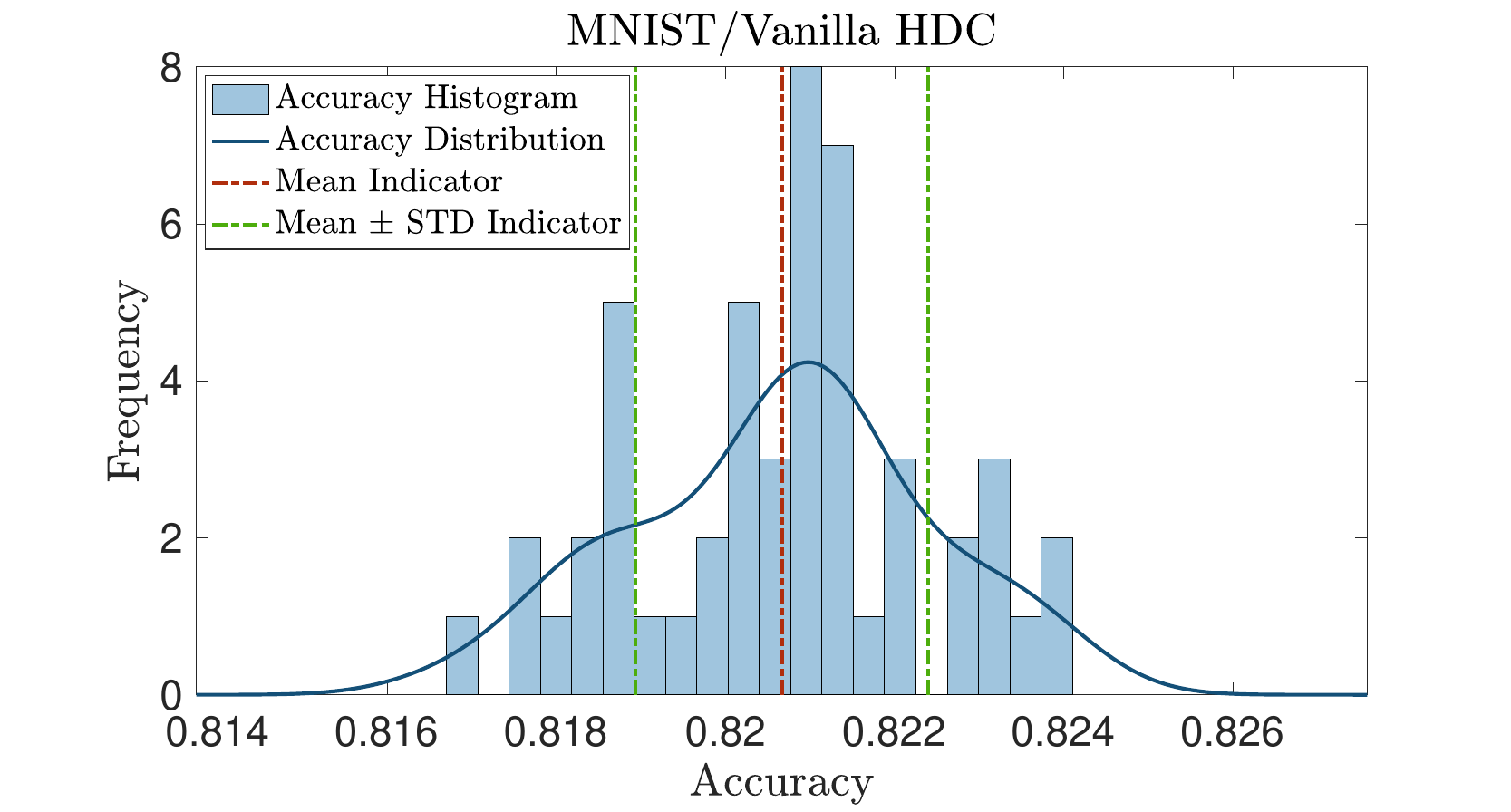}
\end{overpic}\hspace{-.6cm}
\begin{overpic}[trim={025 -.25cm  2.5cm 0},clip,height=1.4in]{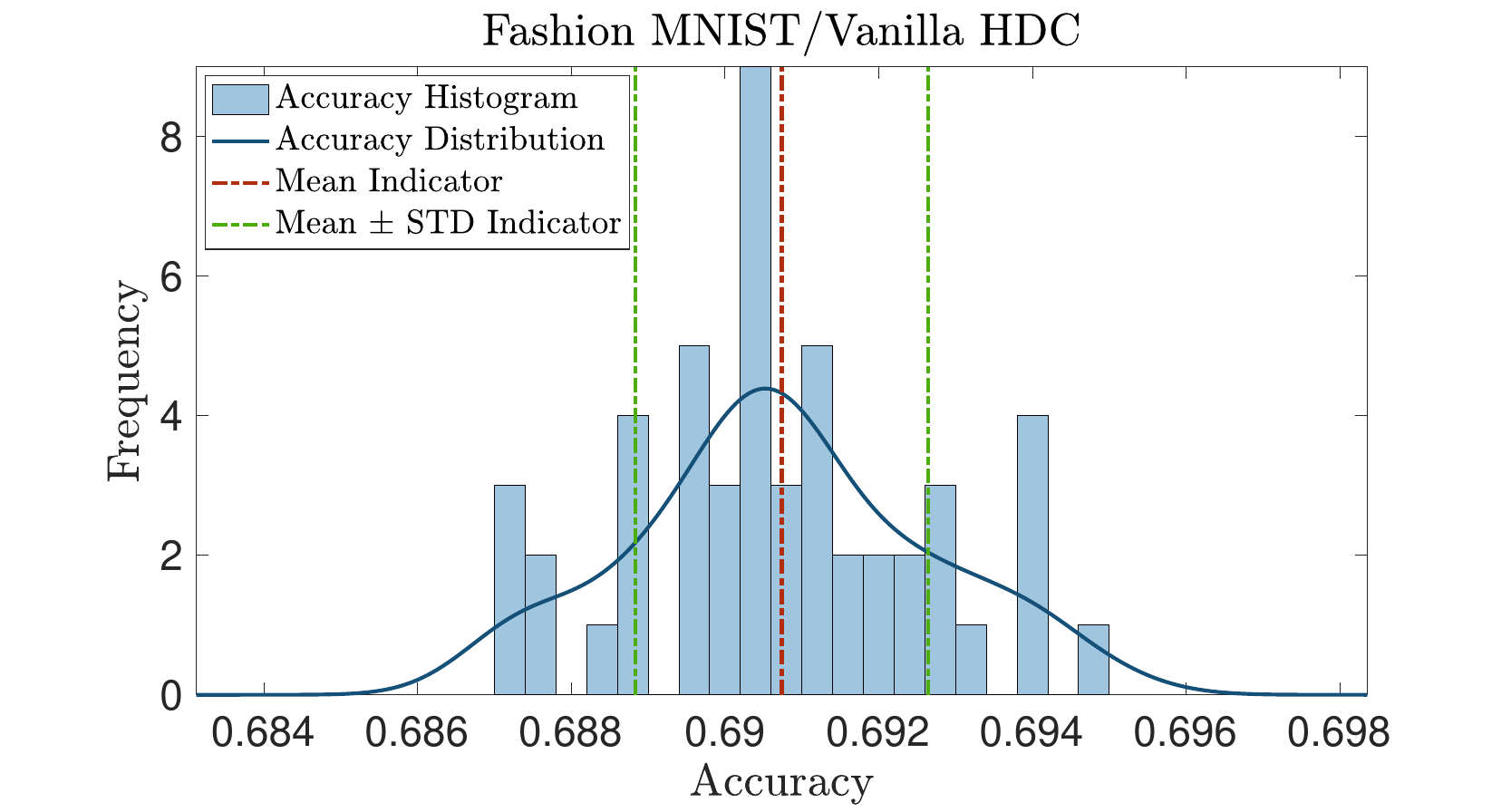}
\put (-1,1) {\scalebox{.75}{\rotatebox{0}{(a)}}} 
\end{overpic}\\[.01cm]
\begin{overpic}[trim={025 -.25cm  0 0},clip,height=1.4in]{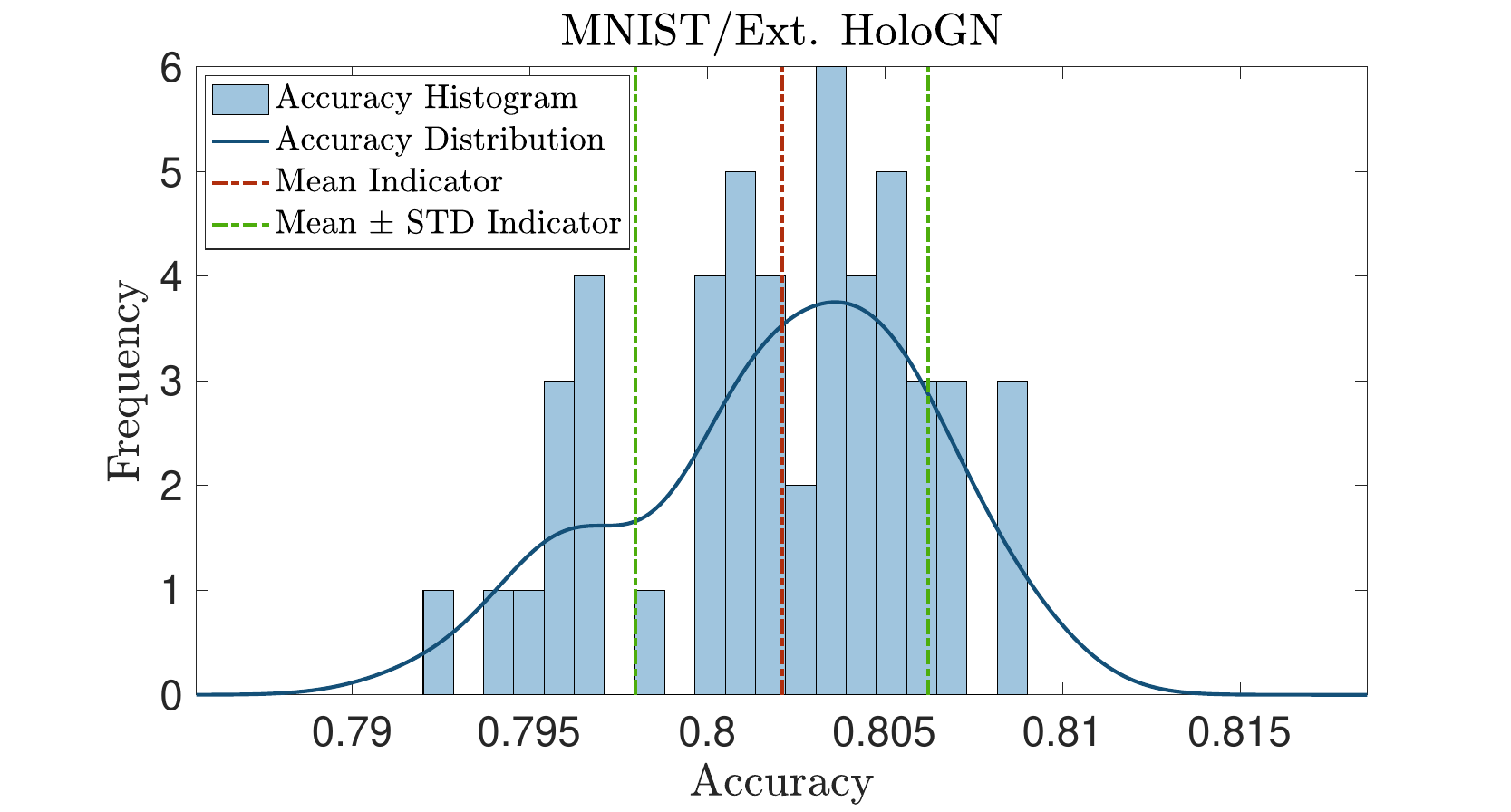}
\end{overpic}\hspace{-.6cm}
\begin{overpic}[trim={025 -.25cm  2.5cm 0},clip,height=1.4in]{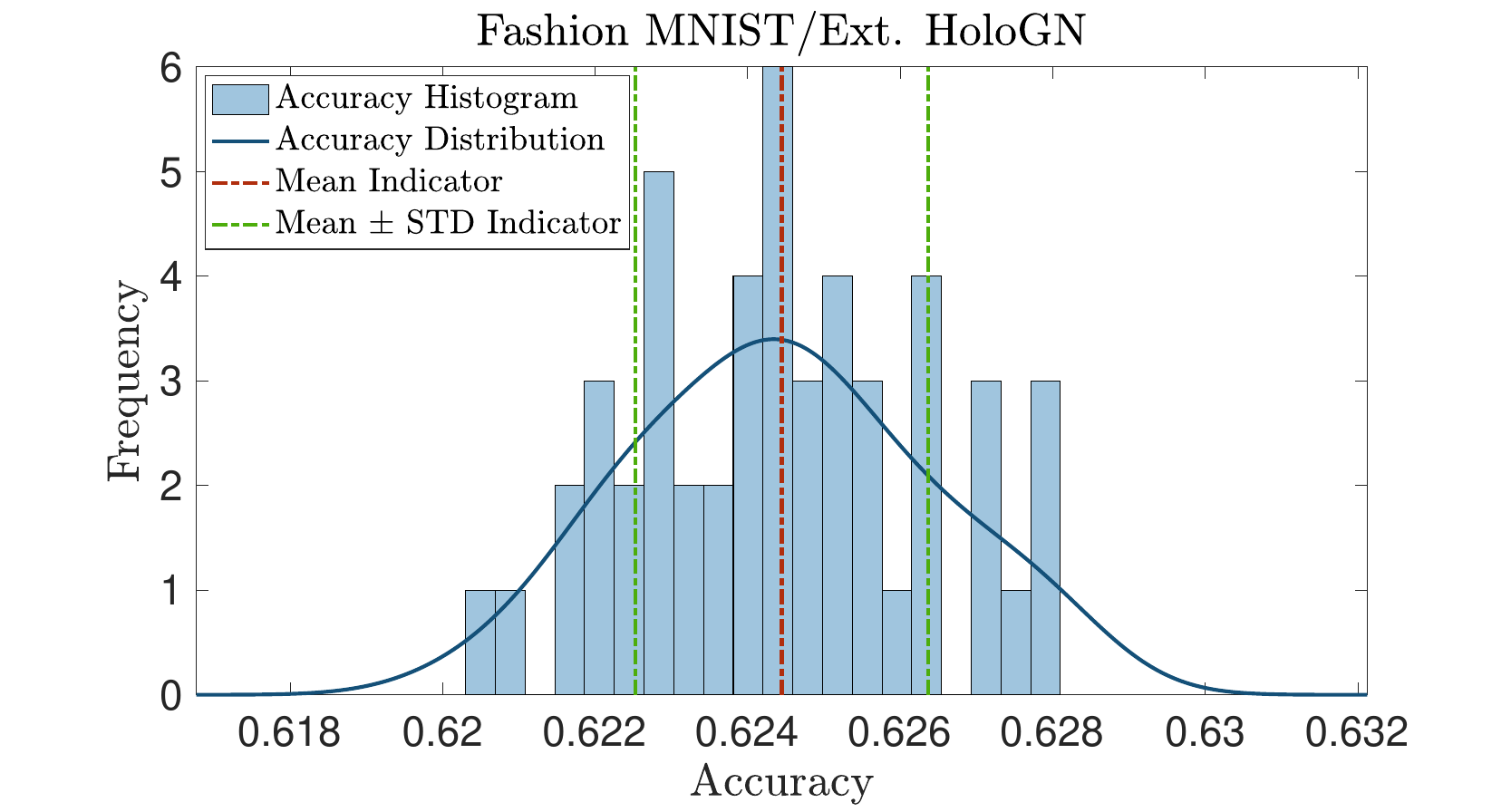}
\put (-1,1) {\scalebox{.75}{\rotatebox{0}{(b)}}} 
\end{overpic}\\[.01cm]
\begin{overpic}[trim={025 -.25cm  0 0},clip,height=1.4in]{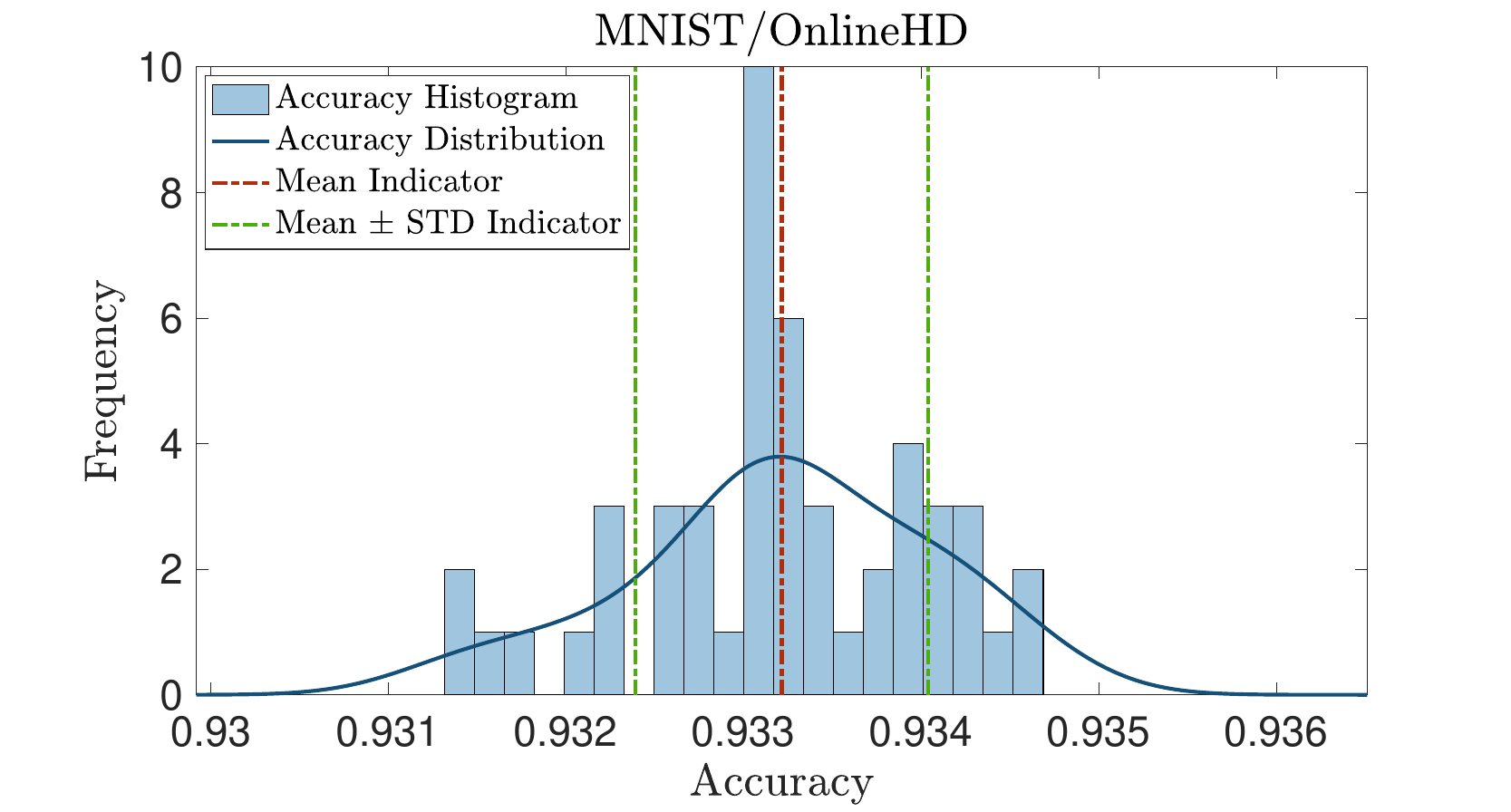}
\end{overpic}\hspace{-.6cm}
\begin{overpic}[trim={025 -.25cm  2.5cm 0},clip,height=1.4in]{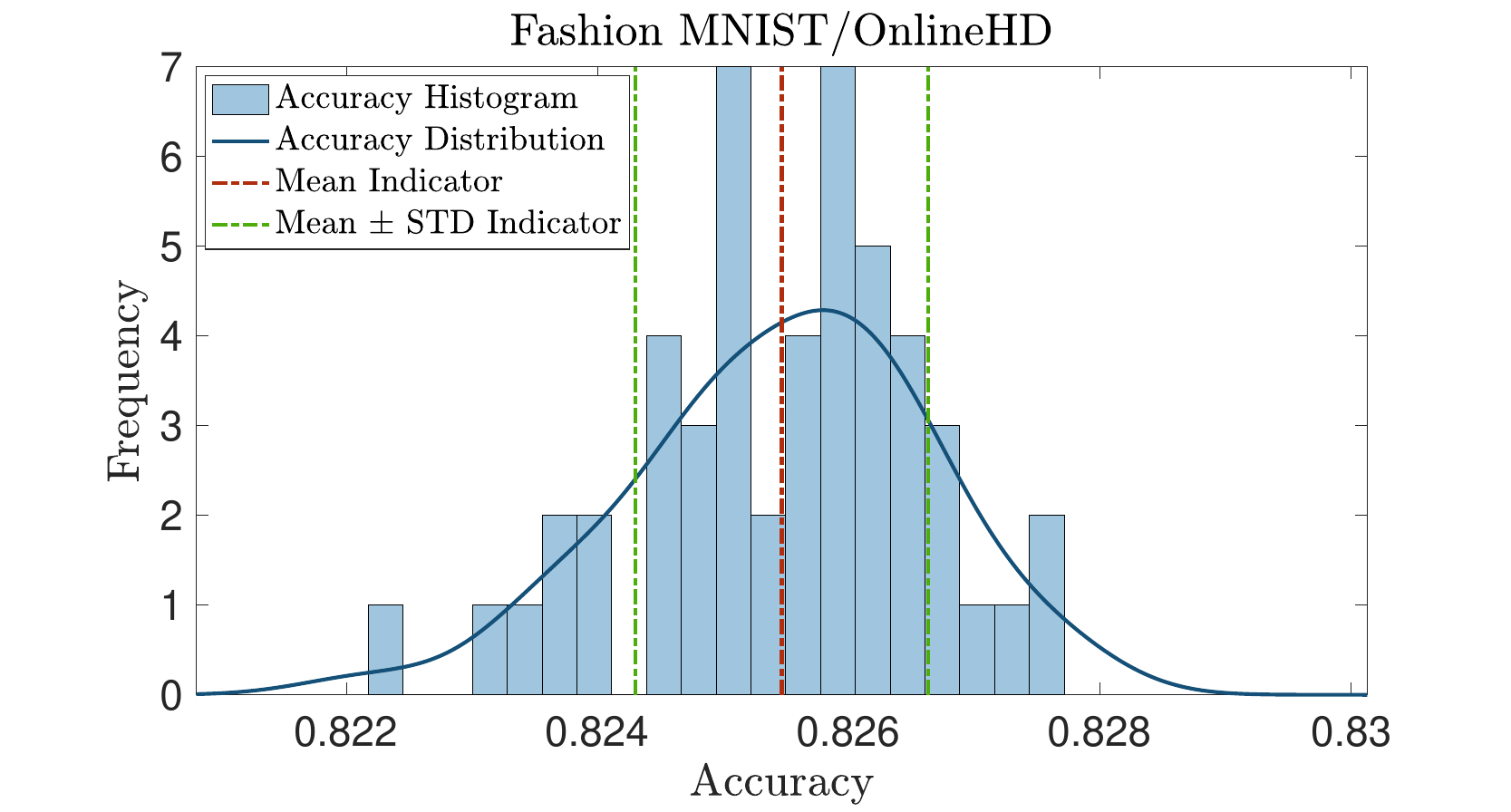}
\put (-1,1) {\scalebox{.75}{\rotatebox{0}{(c)}}} 
\end{overpic}\\[.01cm]
\begin{overpic}[trim={025 -.25cm  0 0},clip,height=1.4in]{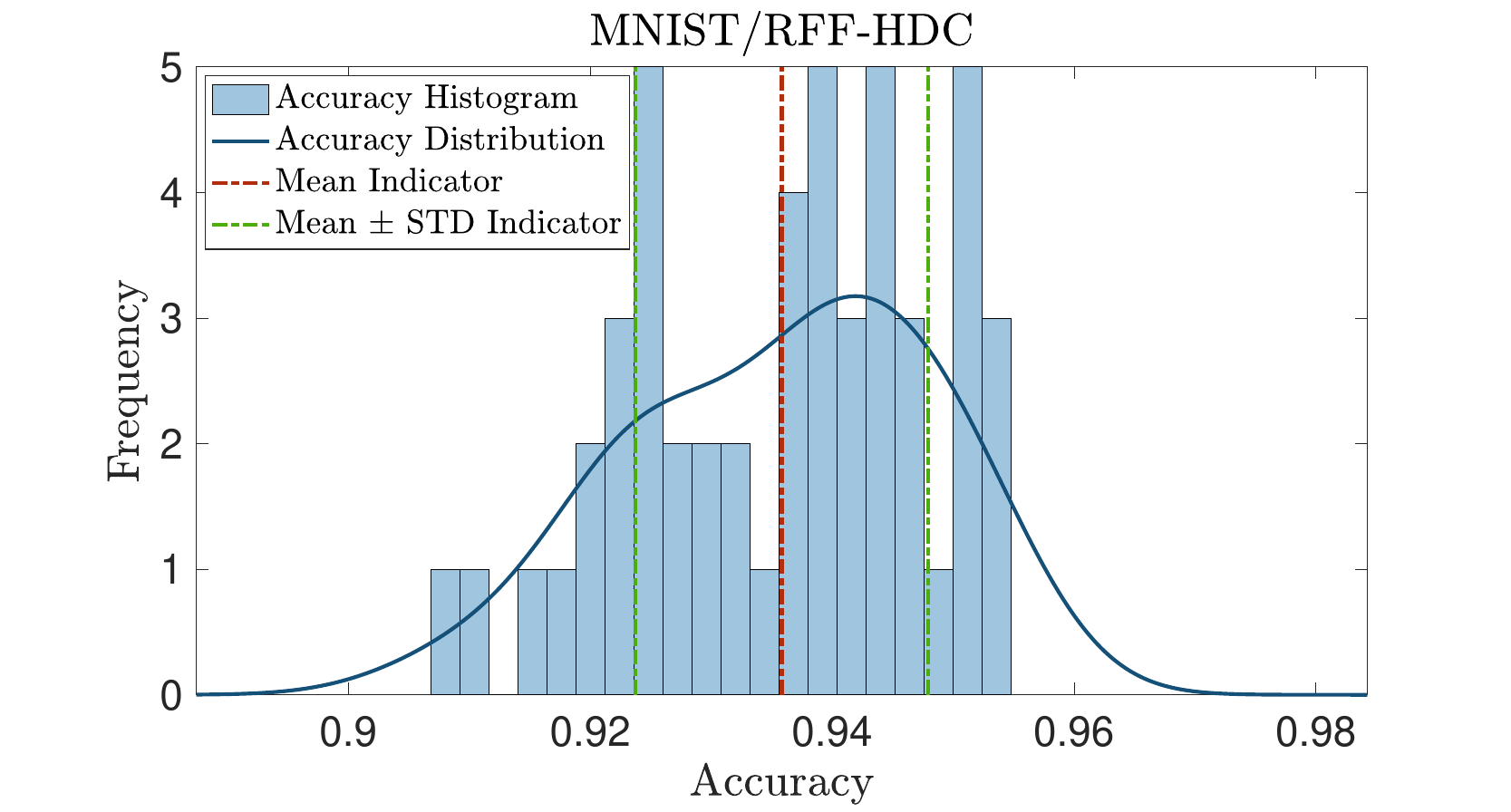}
\end{overpic}\hspace{-.6cm}
\begin{overpic}[trim={025 -.25cm  2.5cm 0},clip,height=1.4in]{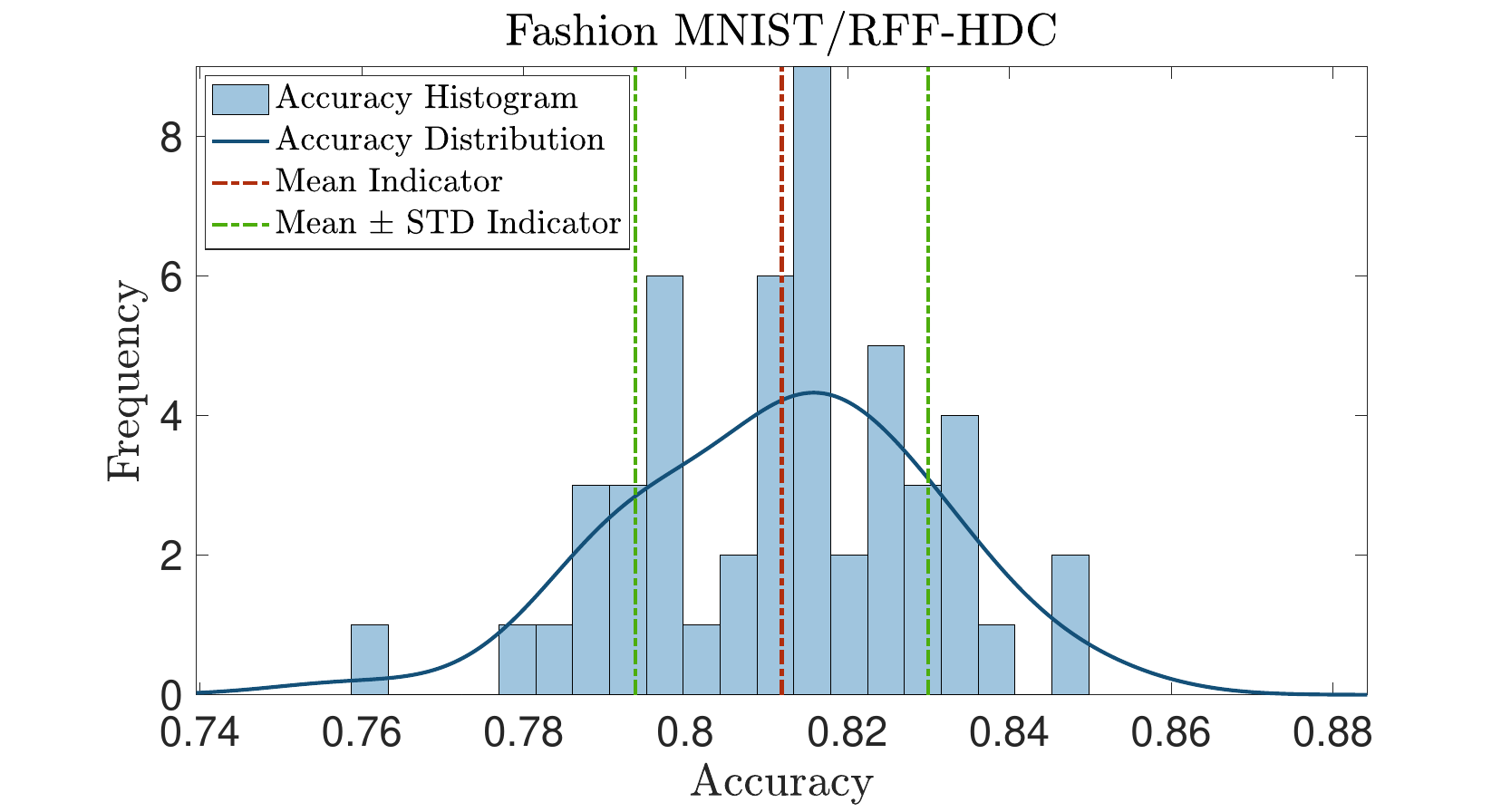}
\put (-1,1) {\scalebox{.75}{\rotatebox{0}{(d)}}} 
\end{overpic}\\[.01cm]
\begin{overpic}[trim={025 -.25cm  0 0},clip,height=1.4in]{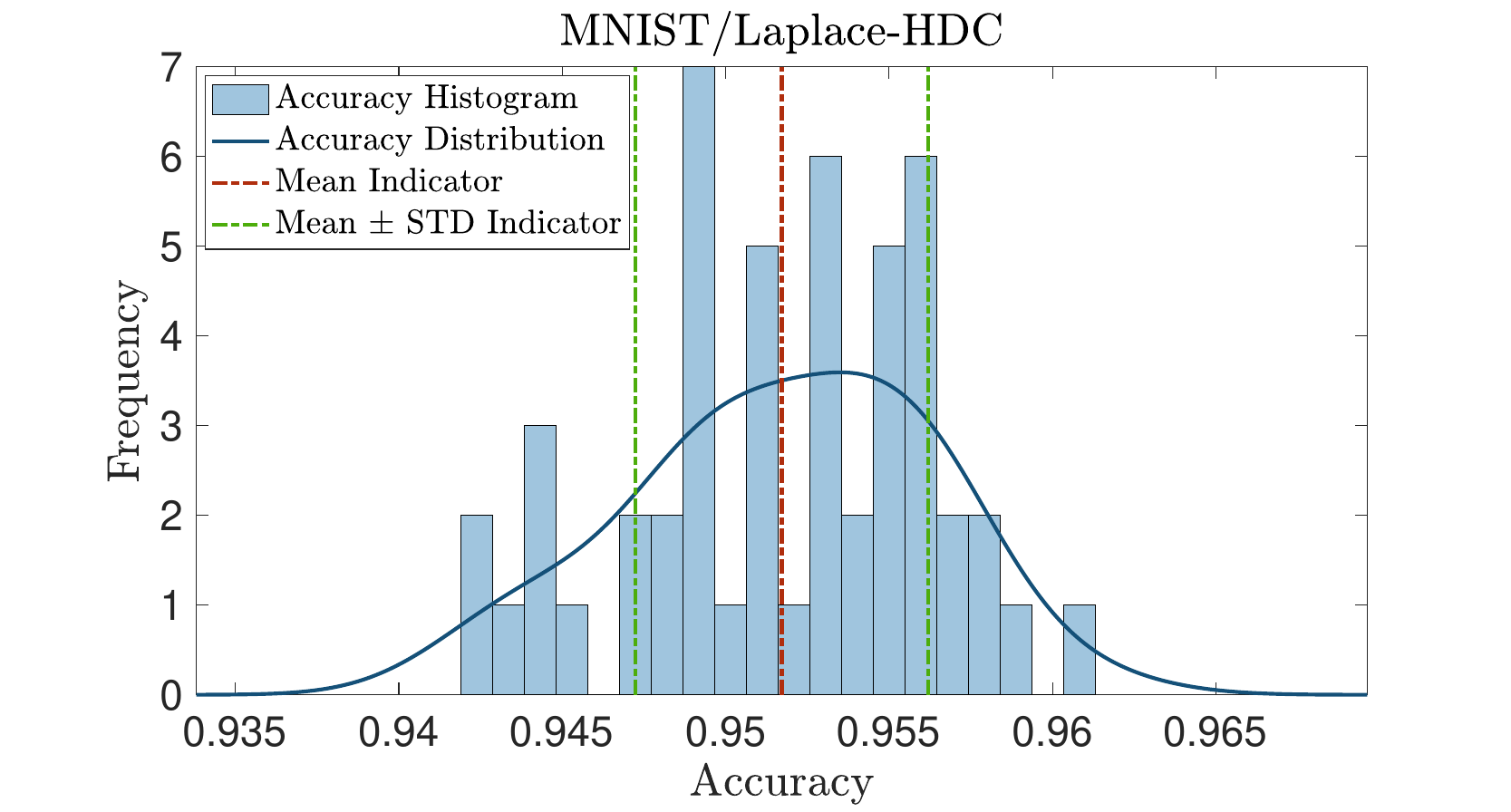}
\end{overpic}\hspace{-.6cm}
\begin{overpic}[trim={025 -.25cm  2.5cm 0},clip,height=1.4in]{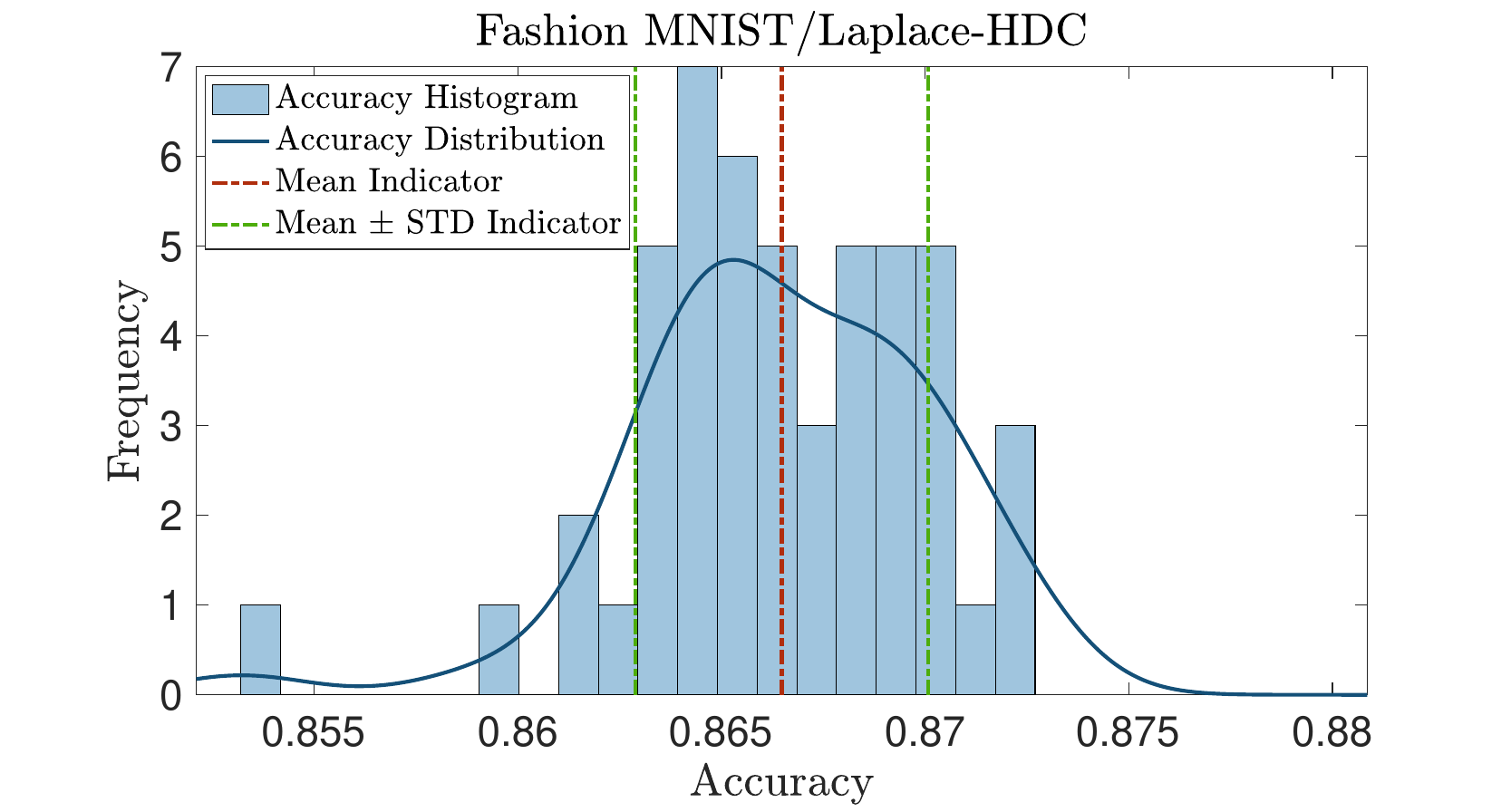}
\put (-1,1) {\scalebox{.75}{\rotatebox{0}{(e)}}} 
\end{overpic}\\[.01cm]
\caption{Accuracy histograms of the methods in Table \ref{tab:comp} for 50 trials. The mean accuracy and one-standard deviation interval are shown with dashed lines: (a) Vanilla HDC, (b) Extended HoloGN, (c) OnlineHD, (d) RFF-HDC, (e) Laplace-HDC }\label{figRealData_all}\vspace{-.5cm}
\end{figure*}

\paragraph{\textbf{OnlineHD}}
Performing iterative training rather than single-pass training is one approach to boost the accuracy of HDC models, although it increases the time complexity and memory usage, which is costly. 
Methods such as OnlineHD \cite{onlineHD2021} relate the low accuracy of single-pass models to the naive gathering of information from all hypervectors that belong to the same class. This leads to the dominance of the common pattern while downplaying the more uncommon patterns in the data. OnlineHD \cite{onlineHD2021} is presented as a single-pass remedy to this problem. Basically, if a hypervector is closely similar to the current state of the class hypervector, then OnlineHD assigns a small weight to it while updating the model in order to decrease its effect, and if the hypervector is distant, the weight increases. 

\paragraph{\textbf{Extended HoloGN}}
Holographic graph neuron (HoloGN) \cite{KleykoDenis2017HGNA} is an approach designed for character recognition over a dataset of small binary images. HoloGN assigns a randomly generated hypervector to each pixel. Then, a circular shift occurs if the pixel color is white. After this stage, the remaining procedure is similar to vanilla HDC bundling (Binary Majority vote, see \S \ref{class:majvote}). An extension of this approach to operate with non-binary datasets such as MNIST was presented in Manabat \emph{et al.} \cite{ExHoloGN}, which is used as a comparison baseline in our experiments. 

The accuracies reported in Table \ref{tab:comp} and the histograms depicted in Figure \ref{figRealData_all} show that Laplace-HDC with convolutional features can outperform the state-of-the-art techniques in terms of the mean accuracy. In terms of the accuracy standard deviation, Laplace-HDC offers a significantly lower deviation compared to the RFF-HDC, with which it shares some foundations. 

\begin{table}[ht!]
\caption{The mean test accuracy (in percentage) of different methods discussed in \S \ref{sec:comp} plus and minus one standard deviation, computed for 50 independent experiments. The reported Laplace-HDC uses Haar convolutional features, 1D-Cyclic permutations, and the Binary SGD classifier.}
\centering
\begin{NiceTabular}[corners=NW,hvlines]{cccccc}
\CodeBefore
  \rowcolor{gray!20}{1}
  \columncolor{gray!20}{1}
  \rowcolors{2}{gray!0}{gray!0}[cols={2,3,4,5,6}]
\Body
               & \!\!\!Vanilla HDC\!\!\! &\!\!\! Ext. HoloGN \!\!\! &\!\!\! OnlineHD\!\!\!  & \!\!\!RFF-HDC\!\!\! &\!\!\! Laplace-HDC\!\!\! \\
\!\!MNIST\!\! &    \!\!\!$82.07\pm0.17$\!\!\! &  \!\!\!$80.21\pm 0.41$\!\!\!  &\!\!\! $93.32\pm 0.08$ \!\!\! & \!\!\!$93.58\pm 1.21$\!\!\! &\!\!\! $95.17\pm 0.44$ \!\!\!    \\
\!\!Fashion MNIST\!\! &\! \!\! $69.07\pm0.19$ \!\!\!& \!\!\!$62.44\pm 0.19$\!\!\! &\!\!\! $82.55\pm 0.12$\!\!\! &\!\!\! $81.19\pm 1.81$\!\!\!  & \!\!$86.65\pm 0.36$ \!\!\!      \\
\end{NiceTabular}
\label{tab:comp}
\end{table}

In the next two sections, we explore some other aspects of the proposed binary HDC beyond the accuracy. 

\subsection{Robustness to Corruptions}\label{robustness}
A notable characteristic of binary HDC 
encodings are their robustness to noise: these binary encodings can remain effective even in the presence of corrupted bits. To demonstrate this robustness property for Laplace-HDC, we perform an experiment where the classification task is stress-tested by randomly corrupting a proportion of the bits of the binary encoding.

In this experiment, for each $\boldsymbol{x} \in \mathcal{X}$ the corresponding encoded hypervector $\embed_{\boldsymbol{x}}\in\{-1,1\}^N$ is corrupted by flipping $k$ of the $N$ bits of the encoded vector to generate a corrupted hypervector $\widetilde{\embed}_{\boldsymbol x}$. More precisely, for each $\boldsymbol{x} \in \mathcal{X}$ we choose a set $\{i_1,\ldots,i_k\}$ from 
$\{1,\ldots,N\}$ independently and uniformly at random without replacement and set
$$
\widetilde{\boldsymbol{\psi}}_{\boldsymbol{x}}(i_j) = -\boldsymbol{\psi}_{\boldsymbol{x}}(i_j), \quad \text{for} \quad j =1,\ldots,k.
$$
After this step, the class of $\widetilde{\embed}_{\boldsymbol x}$ is inferred using a classifier model which was trained on the uncorrupted data $\embed_{\boldsymbol x}$. In this fashion, we are able to determine the degree to which corruption affects classification accuracy. In this experiment, we use the Binary SGD classifier, see \S \ref{class:sgd}. We report results based on the ratio $k/N$ of corrupted bits to the bits in the encoding.
When $k/N=0$, the embedding  $\embed_{\boldsymbol{x}}$ is not altered; the method and classification accuracy are the same as reported in \S \ref{sec:vanilla}.
When $k/N = 1/2$, half the bits are randomly corrupted, and the corrupted embedding $\widetilde{\embed}_{\boldsymbol x}$ and original embedding $\embed_{\boldsymbol x}$ become uncorrelated. 
Figure \ref{fig:robustness} shows the degradation pattern of the Laplace-HDC accuracy as a function of bit error rate. The experiment is performed for the FashionMNIST data for $k=0,2,...,5000$, and hyperparameters $\lambda$ and $N$ are set in the same way described in \S \ref{sec:vanilla}. Each experiment is performed multiple times, and in addition to the mean accuracy, an uncertainty region of radius three standard deviations is depicted around the mean accuracy plot. One can see that with almost up to a 25\% bit error rate (which corresponds to 50\% of the worst possible corruption), the classification accuracy confidently maintains a value above 80\%.

\begin{figure}[ht!]
\centering
\begin{overpic}[trim={025 -.25cm  2.5cm 0},clip,width=.8\textwidth]{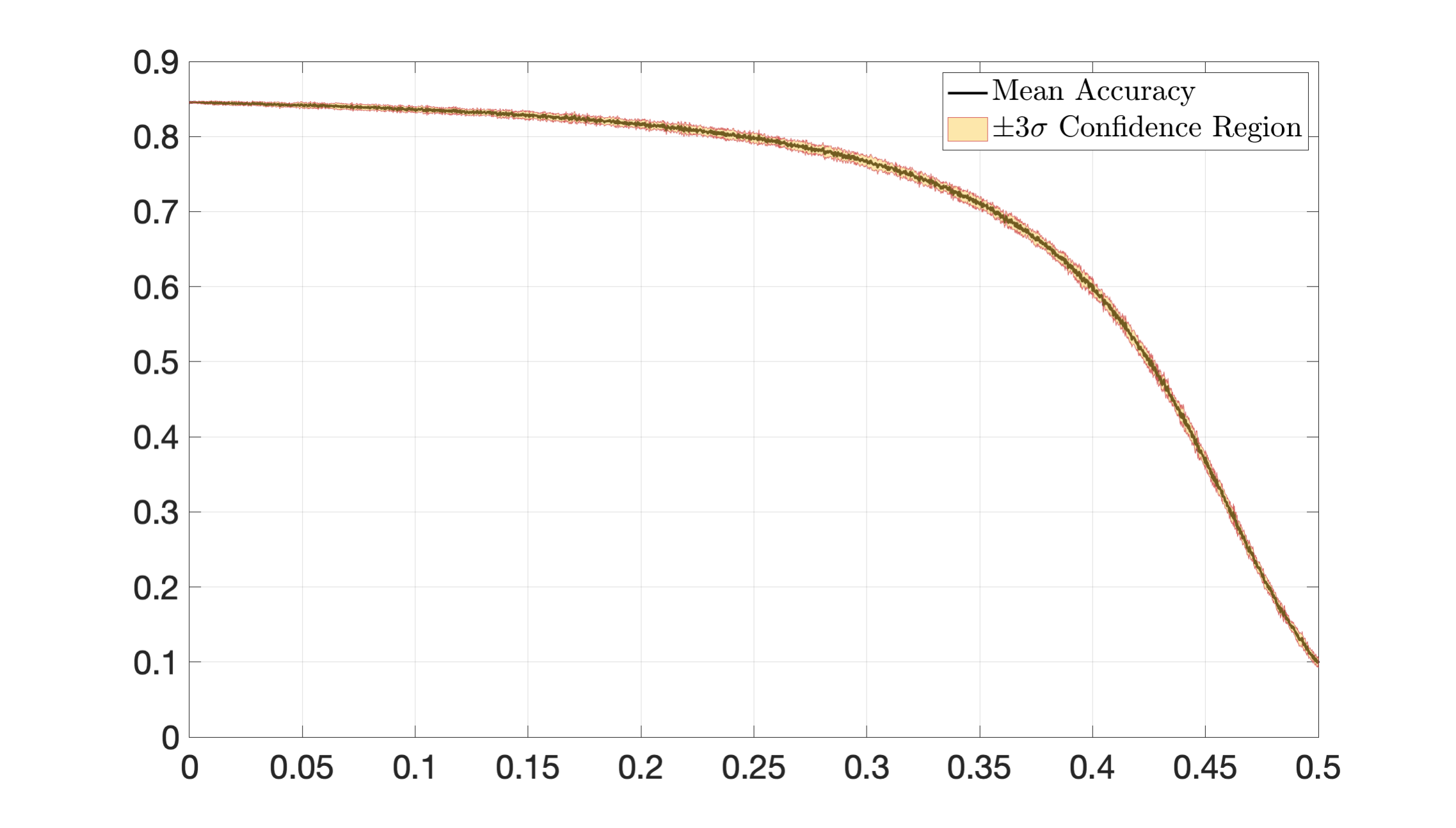}
\put (40,-1) {\scalebox{.75}{\rotatebox{0}{Ratio of Randomly Flipped Bits}}} 
\put (2,25) {\scalebox{.75}{\rotatebox{90}{Model Accuracy}}} 
\end{overpic}
\caption{The robustness of the proposed HDC formulation to noise: in each hypervector, a portion of the bits are randomly flipped, and the accuracy is evaluated for the noisy model. The narrow shaded region around the accuracy curve is $\pm$3 times the standard deviation of the accuracy. One can notice that up to almost 25\% bit error rate, the accuracy does not drop below 80\%.}\label{fig:robustness}
\end{figure}

\subsection{Translation Equivariance}
\label{sec:experimentequi}

In this section, we describe how the 2D-Cyclic family of 
permutations defined in \eqref{2dcyclicshifteq} encodes spatial information and leads to interesting visualizations. In particular, we consider the encoding $\{1,\ldots,m\}^{L \times L} \rightarrow \{-1,+1\}^{M \times M}$ by 
$$
\boldsymbol x\mapsto \embed_{\boldsymbol{x}} = \bigodot_{i,j=1}^{M} 
\boldsymbol{T}_{i, j}^\text{2D-Cyclic}
\boldsymbol v_{\boldsymbol x(i, j)},
$$
which is translation-equivariant up to boundary effects, see \S \ref{sec:translationequivar}. Consider examples from the FashionMNIST dataset; see Figure \ref{fig:fashionexamples}.

Let $\boldsymbol{z}$ denote the all zero image $\boldsymbol{z}(i,j) = 0$ for $i,j =1,\ldots,L$, and $\boldsymbol{\psi}_{\boldsymbol{z}}$ be the encoding of the all zero image. To visualize the hypervector encodings of these images, we plot 
$$
\boldsymbol{\psi}_{\boldsymbol{x}_1} \odot \boldsymbol{\psi}_{\boldsymbol{z}}, \ldots,
\boldsymbol{\psi}_{\boldsymbol{x}_{10}} \odot \boldsymbol{\psi}_{\boldsymbol{z}},
$$
see Figure \ref{fig:shift_2d_examples}.

Informally speaking, this image consists of translated versions of thresholded versions of the original. The images are overlapping, which captures some autocorrelation of the image with itself when inner products are computed. We note that classification performance seems higher when parameters are tuned so that there is overlap. The number of images in the HDC encoding is controlled by the bandwidth parameter $\lambda$.  To make another visualization, we can look at the class averages of these images; see Figure \ref{classaverage}.
\begin{figure}[ht!]
\centering
\includegraphics[width=1\textwidth, trim={0cm 1cm 0cm 1cm}]{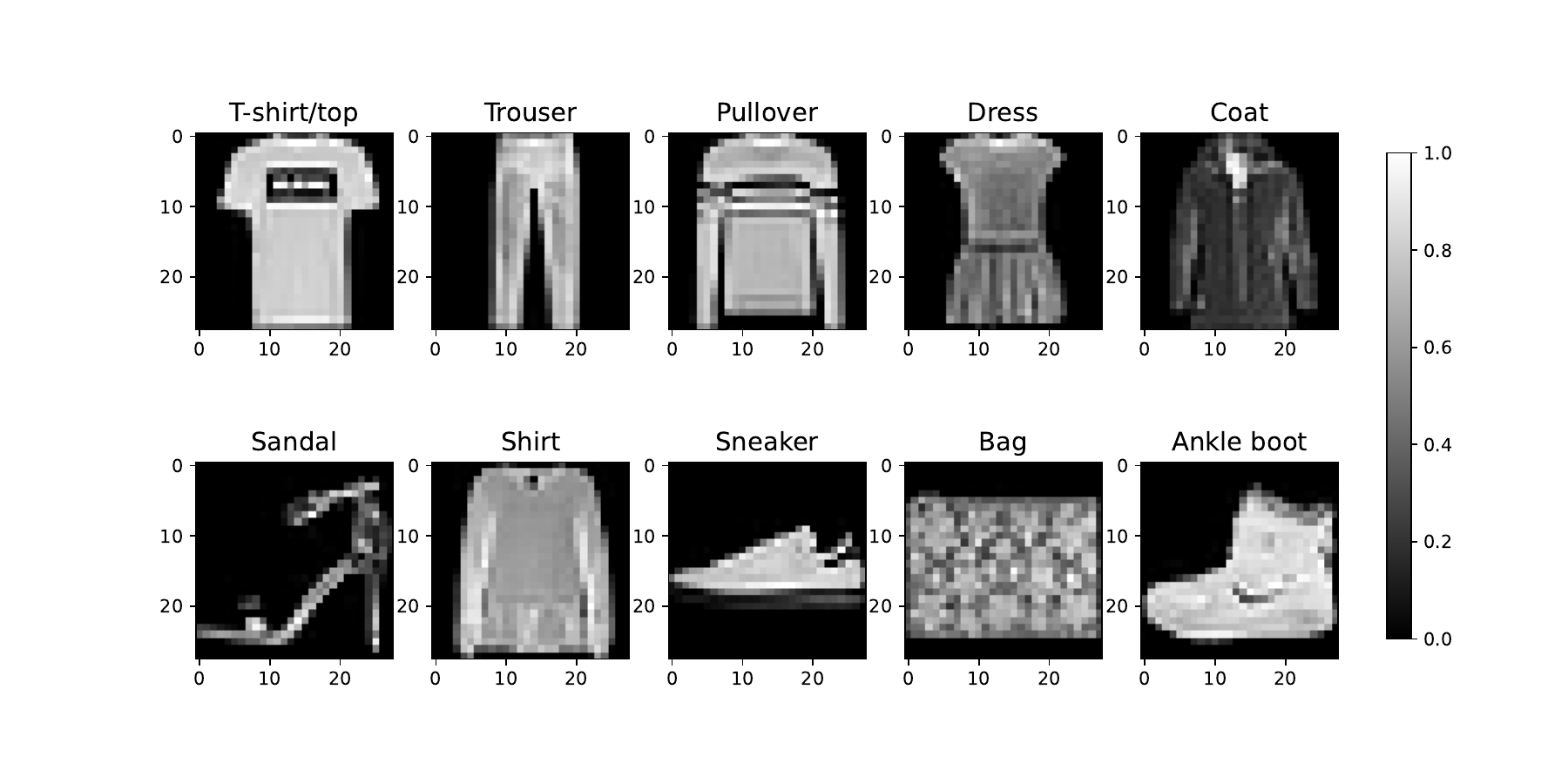}
\caption{One example from each class in Fashion MNSIT dataset}
\label{fig:fashionexamples}
\end{figure}

\begin{figure}[ht!]
\centering
\includegraphics[width=1\textwidth, trim={1cm 1cm 1cm 1cm}]{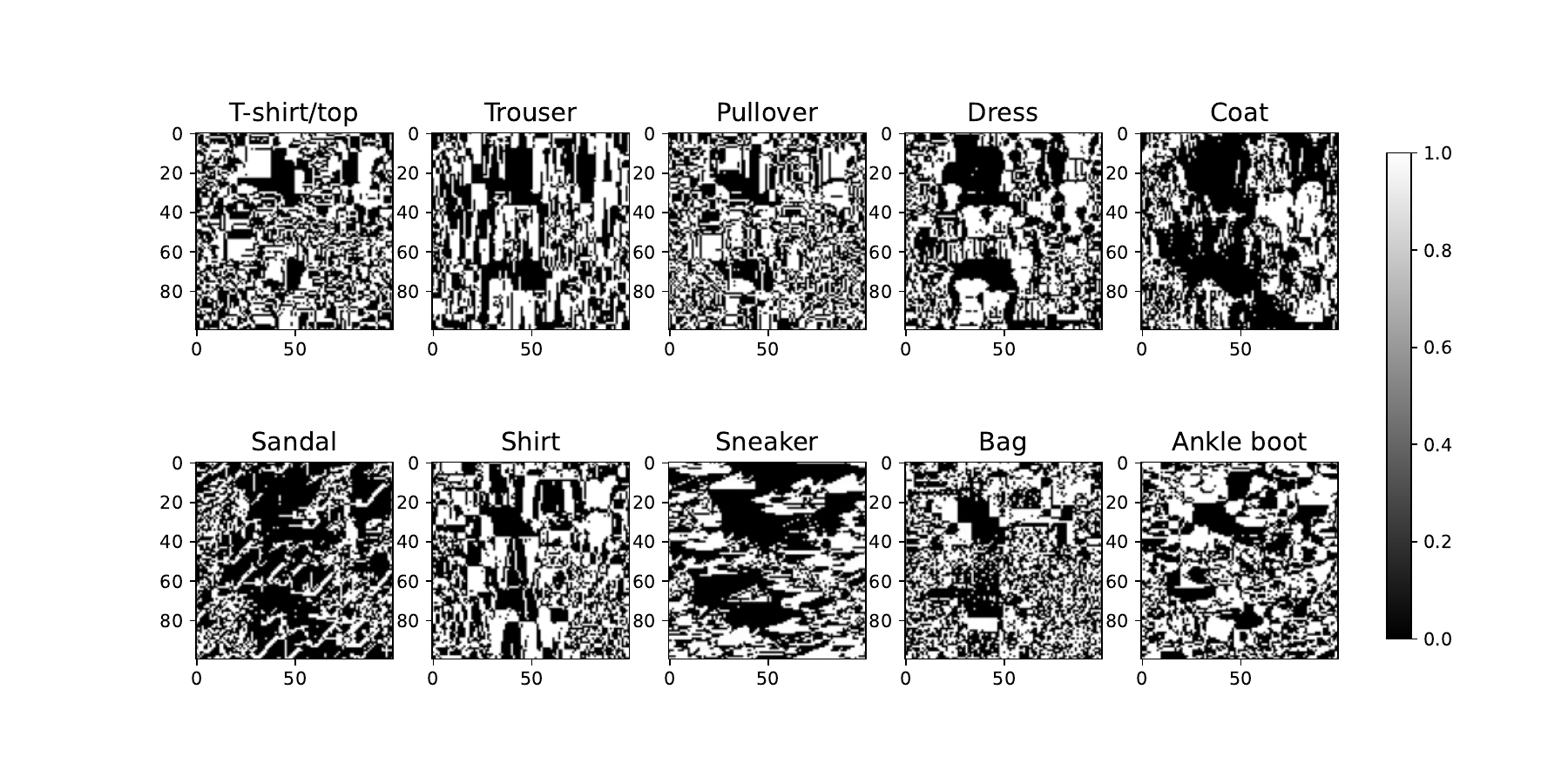}
\caption{We plot $\boldsymbol{\psi}_{\boldsymbol{x}_j} \odot \boldsymbol{\psi}_{\boldsymbol{z}}$ 
for the class example images $\boldsymbol{x}_1,\ldots,\boldsymbol{x}_{10}$  from Figure \ref{fig:fashionexamples}. }
\label{fig:shift_2d_examples}
\end{figure} 

\begin{figure}[ht!]
\centering
\includegraphics[width=1\textwidth, trim={1cm 1cm 1cm 1cm}]{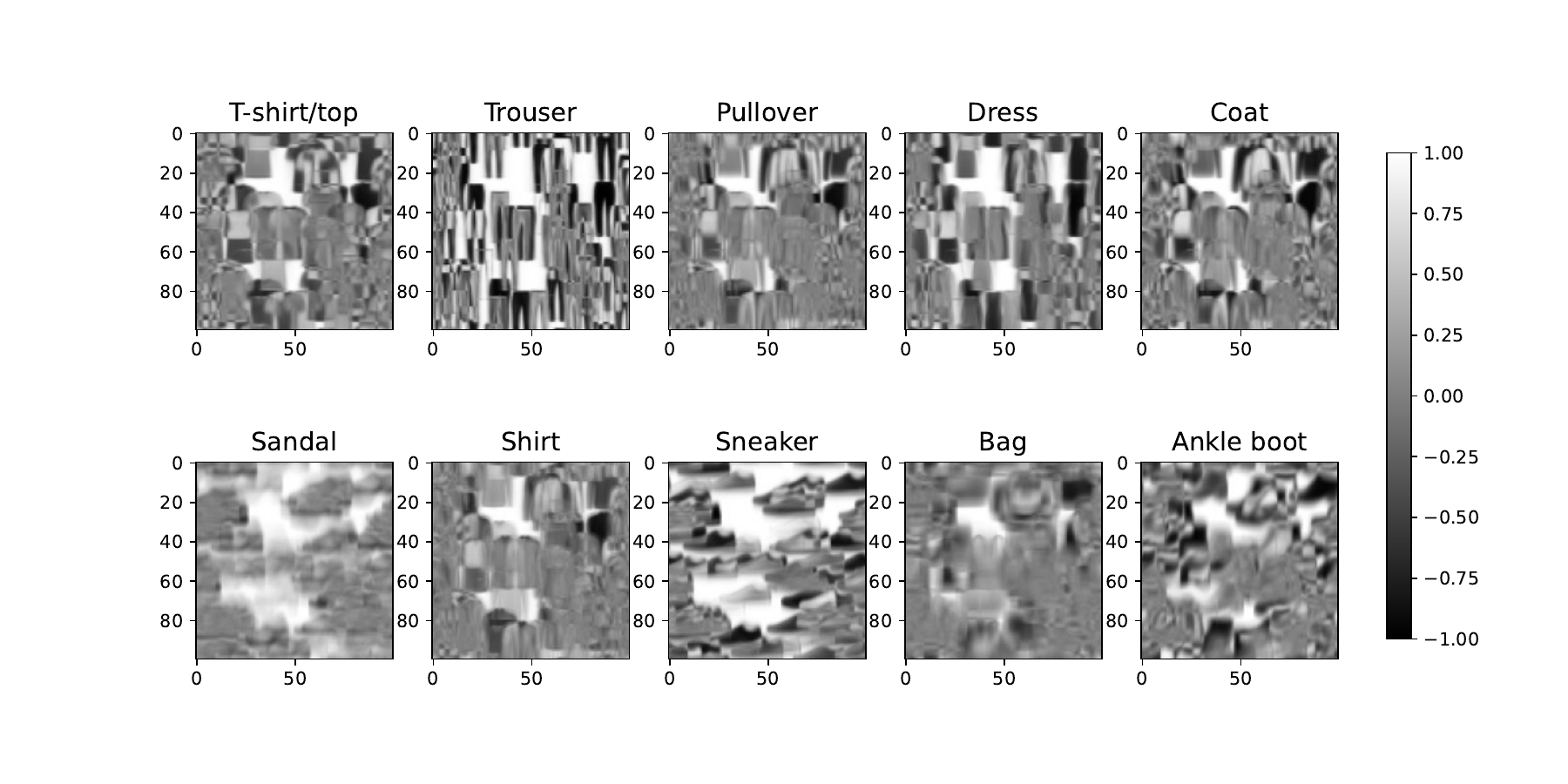}
\caption{We average $\frac{1}{\# (C_j)} \sum_{\boldsymbol{x} \in C_j} \boldsymbol{\psi}_{\boldsymbol{x}} \odot \boldsymbol{\psi}_{\boldsymbol{z}}$ where $C_j$ is the $j$-th class, for $j=1,\ldots,10$. }
\label{classaverage}
\end{figure}

\section{Discussion} \label{discussion}

This paper introduces Laplace-HDC, a binary HDC scheme motivated by the geometry of hypervector data encodings.  We build upon the work of Yu \emph{et al.} \cite{yu2022understanding}, by considering the inner product structure of hypervector encodings resulting from the binding operation for hypervectors with a covariance structure.

We show that the Laplace kernel (rather than the Gaussian kernel) is a natural choice for the covariance structure of the hypervectors used in this construction. 
 In this case, we show that the inner product of the hypervector encodings of data is related to an $\ell_1$-norm Laplace kernel between data points, which motivates a method for setting the bandwidth parameter $\lambda > 0$ in this construction. These observations lead to a practical binary HDC scheme, which we call Laplace-HDC.

Our results also indicate a limitation of binary HDC schemes of this type for image data: the spatial relationship between pixels is lost.
 More precisely, our results show that the inner product structure of hypervector encodings is invariant to global permutations of the data. We demonstrate that when spatial relationships are encoded, even in an elementary way, such as through convolutional Haar features (which are not invariant to global permutations of the pixels), the accuracy of binary HDC improves for image data.

We note that more complicated feature extraction methods could be used to increase the performance further. For example, the features derived from the output of one or more layers of a convolutional neural network trained on image data could be used. The fact that binary HDC schemes of this type are invariant to global permutations of the pixels can be viewed both as a limitation or a feature of binding-based binary HDC encoding schemes.

We emphasize that our theoretical results only say that spatial relationships are not encoded in the inner product structure of the binding operation. It may be possible to recover spatial information via another method. We illustrate such a method when we define a translation-equivariant binary HDC encoding scheme for images, which is a potential direction for future work.

We note that the Trace-Orthongal assumption that we make on the families of permutation matrices we consider may be overly restrictive. We performed some limited experiments using families of permutations, which are each sampled independently and uniformly, which achieved similar accuracy to the Trace-Orthogonal families of permutation we considered (1D-Cyclic, 1D-Block, 2D-Cyclic, 2D-Block). However, it should be noted that each of the families of permutations we considered has efficient implementations that maintain memory locality when encoding batched images. In contrast, performing a uniformly random permutation is orders of magnitude slower due to a lack of memory locality. However, there may be pseudo-random permutations that can be efficiently implemented that are interesting to consider.

\subsection*{Acknowledgements} 
The authors thank Peter Cowal for useful discussions about the paper.

\bibliographystyle{siam}
\bibliography{references}

\begin{appendix}
\section{Proof of Analytic Results}

\subsection{Proof of Theorem \ref{thm1}} \label{proofthm1}

\begin{proof}[Proof of Theorem~\ref{thm1}]
By the definition \eqref{generalencoding:scheme} of the map $\boldsymbol{x} \mapsto \embed_{\boldsymbol{x}}$ we have
 \begin{align} \notag 
S(\boldsymbol{x}, \boldsymbol{y}) 
&= \frac{1}{N}\mathbb{E} \Tr\left(\embed_{\boldsymbol{x}}\embed_{\boldsymbol{y}}^\top \right)
\\&=\notag 
\frac{1}{N} \mathbb{E} \Tr \left( \left( \bigodot_{i=1}^d \boldsymbol{\Pi}_i \boldsymbol{v}_{\boldsymbol{x}(i)}
\right) \left( \bigodot_{i=1}^d  \boldsymbol{v}_{\boldsymbol{y}(i)}^\top \boldsymbol{\Pi}_i^\top 
\right) \right) 
\\&=\notag 
\frac{1}{N} \mathbb{E} \Tr \left( \bigodot_{i=1}^d \boldsymbol{\Pi}_i \boldsymbol{v}_{\boldsymbol{x}(i)}
\boldsymbol{v}_{\boldsymbol{y}(i)}^\top \boldsymbol{\Pi}_i^\top 
\right)   
\\&=\notag 
\frac{1}{N}  \sum_{j=1}^N \mathbb{E} \left( \prod_{i=1}^d \boldsymbol{e}_j^\top \boldsymbol{\Pi}_i \boldsymbol{v}_{\boldsymbol{x}(i)}
\boldsymbol{v}_{\boldsymbol{y}(i)}^\top \boldsymbol{\Pi}_i^\top \boldsymbol{e}_j
\right)  
\\&=\notag
\frac{1}{N}  \sum_{j=1}^N  \prod_{i=1}^d \mathbb{E}\left(\boldsymbol{e}_j^\top \boldsymbol{\Pi}_i \boldsymbol{v}_{\boldsymbol{x}(i)}
\boldsymbol{v}_{\boldsymbol{y}(i)}^\top \boldsymbol{\Pi}_i^\top \boldsymbol{e}_j
\right)
\\\notag &
=
\frac{1}{N}  \sum_{j=1}^N  \prod_{i=1}^d \boldsymbol{K}(\boldsymbol{x}(i),\boldsymbol{y}(i)) 
\\&  \label{eqSxy}
= \prod_{i=1}^d \boldsymbol{K}(\boldsymbol{x}(i),\boldsymbol{y}(i))  .
\end{align}
In the chain of equalities above, the third equality holds since for arbitrary vectors $\boldsymbol{u}_i, \boldsymbol{v}_i,\mathbb{R}^N$, 
\[
\bigodot_{i=1}^d \boldsymbol{u}_i \bigodot_{i'=1}^d \boldsymbol{v}_{i'}^\top = \bigodot_{i=1}^d \boldsymbol{u}_i\boldsymbol{v}_i^\top,
\]
where the right-side $\bigodot$ represents a straightforward generalization of Hadamard product from vectors to matrices. Moreover, by the construction of $\boldsymbol{v}_k$, the elements $\boldsymbol{v}_k(j)$ and $\boldsymbol{v}_{k'}(j')$ are statistically independent whenever $j\neq j'$. As a result, since the permutation matrices $\boldsymbol{\Pi}_i$ are non-overlapping, the factors $\boldsymbol{e}_j^\top \boldsymbol{\Pi}_i \boldsymbol{v}_{\boldsymbol{x}(i)} \boldsymbol{v}_{\boldsymbol{y}(i)}^\top \boldsymbol{\Pi}_i^\top \boldsymbol{e}_j$ and $\boldsymbol{e}_j^\top \boldsymbol{\Pi}_{i'} \boldsymbol{v}_{\boldsymbol{x}(i')}
\boldsymbol{v}_{\boldsymbol{y}(i')}^\top \boldsymbol{\Pi}_{i'}^\top \boldsymbol{e}_j$ become independent whenever $i\neq i'$, which justifies the fifth equality. Finally, the sixth equality is a straightforward implication of \eqref{simvivj:scalar}.

Next, we bound the variance. First, we compute the second moment
\begin{equation} \label{approxstepthm1}
\begin{split}
\mathbb{E} \left( \frac{ \embed_{\boldsymbol{x}}^\top \embed_{\boldsymbol{y}}}{N} \right)^2 
&=  \frac{1}{N^2}  \mathbb{E} \left( \sum_{j=1}^N  \prod_{i=1}^d \boldsymbol{e}_j^\top \boldsymbol{\Pi}_i \boldsymbol{v}_{\boldsymbol{x}(i)}
\boldsymbol{v}_{\boldsymbol{y}(i)}^\top \boldsymbol{\Pi}_i^\top \boldsymbol{e}_j \right)^2 \\
&= \frac{1}{N^2}  \mathbb{E} \left( \sum_{j=1}^N  \prod_{i=1}^d \boldsymbol{e}_j^\top \boldsymbol{\Pi}_i \boldsymbol{v}_{\boldsymbol{x}(i)}
\boldsymbol{v}_{\boldsymbol{y}(i)}^\top \boldsymbol{\Pi}_i^\top \boldsymbol{e}_j \right)        \left( \sum_{j'=1}^N  \prod_{i'=1}^d \boldsymbol{e}_{j'}^\top \boldsymbol{\Pi}_{i'} \boldsymbol{v}_{\boldsymbol{x}(i')}
\boldsymbol{v}_{\boldsymbol{y}(i')}^\top \boldsymbol{\Pi}_{i'}^\top \boldsymbol{e}_{j'} \right)\\
&=  \frac{1}{N^2}   \sum_{j=1}^N  \sum_{j'=1}^N 
\mathbb{E} \left(
\prod_{i=1}^d \boldsymbol{e}_j^\top \boldsymbol{\Pi}_i \boldsymbol{v}_{\boldsymbol{x}(i)}
\boldsymbol{v}_{\boldsymbol{y}(i)}^\top \boldsymbol{\Pi}_i^\top \boldsymbol{e}_j \prod_{i'=1}^d \boldsymbol{e}_{j'}^\top \boldsymbol{\Pi}_{i'} \boldsymbol{v}_{\boldsymbol{x}(i')}
\boldsymbol{v}_{\boldsymbol{y}(i')}^\top \boldsymbol{\Pi}_{i'}^\top \boldsymbol{e}_{j'}\right).
\end{split}
\end{equation}
Define the $N\times N$ matrix 
\[
\boldsymbol{Q} := \sum_{i'=1}^d\sum_{i=1}^d\boldsymbol{\Pi}_i\boldsymbol{\Pi}_{i'}^\top,
\]
and accordingly, define
\[
\Omega := \left\{(j,j'): \boldsymbol{Q}(j,j')>0\right\}, 
\]
where $\boldsymbol{Q}(j,j')$ is the $(j,j')$-th element of $\textbf{Q}$. One can split the summation above over $\Omega$ and $\Omega^c$ as:
\begin{equation}\label{EOEOc}
    \mathbb{E} \left( \frac{ \embed_{\boldsymbol{x}}^\top \embed_{\boldsymbol{y}}}{N} \right)^2 = E_\Omega + E_{\Omega^c},
\end{equation}
where for $\mathcal{X}=\Omega,\Omega^c$:
\[
E_\mathcal{X} = \frac{1}{N^2}  \mathbb{E} \sum_{(j,j')\in\mathcal{X}} \prod_{i=1}^d \boldsymbol{e}_j^\top \boldsymbol{\Pi}_i \boldsymbol{v}_{\boldsymbol{x}(i)}
\boldsymbol{v}_{\boldsymbol{y}(i)}^\top \boldsymbol{\Pi}_i^\top \boldsymbol{e}_j \prod_{i'=1}^d \boldsymbol{e}_{j'}^\top \boldsymbol{\Pi}_{i'} \boldsymbol{v}_{\boldsymbol{x}(i')}
\boldsymbol{v}_{\boldsymbol{y}(i')}^\top \boldsymbol{\Pi}_{i'}^\top \boldsymbol{e}_{j'}. 
\]
The summand is clearly upper-bounded by 1, so the summation of terms  over $\Omega$ is bounded by
\begin{equation}\label{EO}
    E_{\Omega} \leq \frac{\sum_{(j,j')\in \Omega} 1}{N^2} =
    \frac{|\Omega|}{N^2} = 
    \frac{\gamma_{\mathcal{P}}}{N^2},
\end{equation}
where the final equality follows from the definition of $\gamma_\mathcal{P}$.
On the other hand, for $(j,j')\in \Omega^c$ we have
\begin{equation}\label{EOc}
 \mathbb{E} \prod_{i=1}^d \boldsymbol{e}_j^\top \boldsymbol{\Pi}_i \boldsymbol{v}_{\boldsymbol{x}(i)}
\boldsymbol{v}_{\boldsymbol{y}(i)}^\top \boldsymbol{\Pi}_i^\top \boldsymbol{e}_j \prod_{i'=1}^d \boldsymbol{e}_{j'}^\top \boldsymbol{\Pi}_{i'} \boldsymbol{v}_{\boldsymbol{x}(i')}
\boldsymbol{v}_{\boldsymbol{y}(i')}^\top \boldsymbol{\Pi}_{i'}^\top \boldsymbol{e}_{j'}
= S(\boldsymbol{x},\boldsymbol{y})^2. 
\end{equation}
To see why \eqref{EOc} holds, we start by showing that if  $(j,j')\in\Omega^c$, then 
\begin{equation} \label{eqpermnoteq}
\boldsymbol{e}_{j}^\top \boldsymbol{\Pi}_{i} \not = \boldsymbol{e}_{j'}^\top \boldsymbol{\Pi}_{i'}, \quad \forall i,i' \in \{1,\ldots,d\}.
\end{equation}
Suppose not, that is, suppose $\boldsymbol{e}_{j}^\top \boldsymbol{\Pi}_{i_0} = \boldsymbol{e}_{j'}^\top \boldsymbol{\Pi}_{i'_0}$, for some $i_0,i_0' \in \{1,\ldots,d\}$. Then,
$$
0 < \boldsymbol{e}_{j}^\top \boldsymbol{\Pi}_{i_0}  \boldsymbol{\Pi}_{i'_0}^\top 
\boldsymbol{e}_{j'}
\le \boldsymbol{e}_{j}^\top\sum_i\sum_{i'} \boldsymbol{\Pi}_{i}  \boldsymbol{\Pi}_{i'}^\top \boldsymbol{e}_{j'}=\boldsymbol{Q}(j,j'),
$$
which contradicts the fact that $(j,j') \in \Omega^c$. Recall that, by construction, the elements $\boldsymbol{v}_k(j)$ and $\boldsymbol{v}_{k'}(j')$ are statistically independent whenever $j\neq j'$.
By \eqref{eqpermnoteq}, it follows that $\boldsymbol{e}_j^\top \boldsymbol{\Pi}_i \boldsymbol{v}_{\boldsymbol{x}(i)}
\boldsymbol{v}_{\boldsymbol{y}(i)}^\top \boldsymbol{\Pi}_i^\top \boldsymbol{e}_j$ and $\boldsymbol{e}_{j'}^\top \boldsymbol{\Pi}_{i'} \boldsymbol{v}_{\boldsymbol{x}(i')}
\boldsymbol{v}_{\boldsymbol{y}(i')}^\top \boldsymbol{\Pi}_{i'}^\top \boldsymbol{e}_{j'}$ 
are independent, for all $i,i' \in \{1,\ldots,d\}$. Thus, we can take the expectation of the product of $i$ and $i'$ separately and use the fact that
\[\prod_{i=1}^d \mathbb{E}\left(\boldsymbol{e}_j^\top \boldsymbol{\Pi}_i \boldsymbol{v}_{\boldsymbol{x}(i)}
\boldsymbol{v}_{\boldsymbol{y}(i)}^\top \boldsymbol{\Pi}_i^\top \boldsymbol{e}_j
\right) = S(\textbf{x},\textbf{y}),
\]
to deduce \eqref{EOc}. Combining \eqref{EOEOc}, \eqref{EO}, and \eqref{EOc} gives
$$
\mathbb{E} \left( \frac{ \embed_{\boldsymbol{x}}^\top \embed_{\boldsymbol{y}}}{N} \right)^2 
\le  \frac{\gamma_\mathcal{P}}{N^2}  + \frac{1}{N^2} \sum_{(j,j')\in\Omega^c} S(\boldsymbol{x},\boldsymbol{y})^2 = \frac{\gamma_{\mathcal{P}}}{N^2} + \frac{N^2 - \gamma_\mathcal{P}}{N^2} S(\boldsymbol{x},\boldsymbol{y})^2.
$$
Finally, we have
\begin{equation}
\begin{split}
\Var \left( \frac{ \embed_{\boldsymbol{x}}^\top \embed_{\boldsymbol{y}}}{N} \right) &= 
\mathbb{E} \left( \frac{ \embed_{\boldsymbol{x}}^\top \embed_{\boldsymbol{y}}}{N} \right)^2  - \left( \mathbb{E}
 \frac{ \embed_{\boldsymbol{x}}^\top \embed_{\boldsymbol{y}}}{N} \right)^2 \\
 & \le \frac{\gamma_\mathcal{P}}{N^2} + \frac{N^2 -  \gamma_\mathcal{P}}{N^2} S(\boldsymbol{x},\boldsymbol{y})^2 - S(\boldsymbol{x}, \boldsymbol{y})^2 \\ 
 & = \frac{\gamma_\mathcal{P}}{N^2} (1-S(\boldsymbol{x},\boldsymbol{y})^2) \\ 
 & \le \frac{2\gamma_\mathcal{P}}{N^2} (1-S(\boldsymbol{x},\boldsymbol{y})),\\ 
\end{split}
\end{equation}
where the final inequality follows from using the fact that $1+S(\boldsymbol{x},\boldsymbol{y}) \le 2$.
This completes the proof.

\end{proof}

\begin{remark}[Sharpness of proof of Theorem \ref{thm1}]
From  \eqref{approxstepthm1} we have
$$
\mathbb{E} \left( \frac{ \embed_{\boldsymbol{x}}^\top \embed_{\boldsymbol{y}}}{N} \right)^2 
=  \frac{1}{N^2}   \sum_{j=1}^N  \sum_{j'=1}^N 
\mathbb{E} \left(\prod_{i=1}^d X_{j,i} \prod_{i'=1}^d X_{j',i'}\right) ,
$$
where 
$$
X_{j,i} = \boldsymbol{e}_j^\top \boldsymbol{\Pi}_i \boldsymbol{v}_{\boldsymbol{x}(i)}
\boldsymbol{v}_{\boldsymbol{y}(i)}^\top \boldsymbol{\Pi}_i^\top \boldsymbol{e}_j .
$$
In the following, we show this estimate can be refined leading to a more complicated statement.
Let 
$$
\Omega_{j,j'} = \left\{ i \in \{1,\ldots,d\}: \boldsymbol{e}_j^\top \boldsymbol{\Pi}_i \sum_{i'=1}^d \boldsymbol{\Pi}^\top_{i'} \boldsymbol{e}_{j'} > 0 \right\}.
$$
Using this notation, we have
\begin{equation} \label{omegajjprime}
\mathbb{E} \left( \prod_{i=1}^d X_{j,i} \prod_{i'=1}^d X_{j',i'} \right)  = 
\prod_{i \in \Omega_{j,j'}^c} \mathbb{E} \left(X_{j,i}\right)  \prod_{i' \in \Omega_{j',j}^c}  \mathbb{E} \left(X_{j',i'}\right)  \mathbb{E} \left( \prod_{i \in \Omega_{j,j'}} X_{j,i} \prod_{i' \in \Omega_{j',j}} X_{j',i'} \right),
\end{equation}
where we used the fact that $X_{j,i}$ is independent of all other random variables in the product when $i \in \Omega_{j,j'}^c$, and likewise, by symmetry, $X_{j,i'}$ is independent of all other random variables in the product when $i' \in \Omega_{j',j}$. We have
$$
 \prod_{i \in \Omega_{j,j'}^c} \mathbb{E} X_{j,i}  \prod_{i' \in \Omega_{j',j}^c}  \mathbb{E} X_{j',i'} =
\prod_{i \in \Omega_{j,j'}^c} \boldsymbol{K}(\boldsymbol{x}(i),\boldsymbol{y}(i))  \prod_{i' \in \Omega_{j',j}^c}  \boldsymbol{K}(\boldsymbol{x}(i'),\boldsymbol{y}(i')), 
$$
while the third term on the right-hand side of \eqref{omegajjprime} is bounded by $1$. 
In fact, this bound is sharp in certain cases, for example, in the case when $\boldsymbol{x}$ and $\boldsymbol{y}$ are constant. In summary, we have
$$
\mathbb{E} \left( \frac{ \embed_{\boldsymbol{x}}^\top \embed_{\boldsymbol{y}}}{N} \right)^2 
\le  \frac{1}{N^2}   \sum_{j=1}^N  \sum_{j'=1}^N 
\prod_{i \in \Omega_{j,j'}^c} \boldsymbol{K}(\boldsymbol{x}(i),\boldsymbol{y}(i))  \prod_{i' \in \Omega_{j',j}^c}  \boldsymbol{K}(\boldsymbol{x}(i'),\boldsymbol{y}(i')).
$$
Note that with this notation $\Omega_{j,j'}$ is the set of ``bad permutations'',
which cause terms in the product of random variables to be dependent. When $\Omega_{j,j'} = \emptyset$ and $\Omega_{j',j} = \emptyset$ we have
$$
\prod_{i \in \Omega_{j,j'}^c} \boldsymbol{K}(\boldsymbol{x}(i),\boldsymbol{y}(i))  \prod_{i' \in \Omega_{j',j}^c}  \boldsymbol{K}(\boldsymbol{x}(i'),\boldsymbol{y}(i')) = S(\boldsymbol{x},\boldsymbol{y})^2.
$$
Alternatively, if there is only one bad permutation in each set $\Omega_{j,j'} = \{i_0\}$ and $\Omega_{j',j} = \{i_0'\}$ we have
$$
\prod_{i \in \Omega_{j,j'}^c} \boldsymbol{K}(\boldsymbol{x}(i),\boldsymbol{y}(i))  \prod_{i' \in \Omega_{j',j}^c}  \boldsymbol{K}(\boldsymbol{x}(i'),\boldsymbol{y}(i')) 
= \frac{S(\boldsymbol{x},\boldsymbol{y})^2}{\boldsymbol{K}(\boldsymbol{x}(i_0),\boldsymbol{y}(i_0))
\boldsymbol{K}(\boldsymbol{x}(i_0'),\boldsymbol{y}(i_0'))}.
$$
Previously, we performed a global analysis, which only looked at the number of $j,j'$ for which there are any bad permutations (see $\Omega$ below). This refined analysis shows that the number of bad permutations also matters. The block-based permutation schemes minimize the number of $j,j'$ for which $\Omega_{j,j'} > 0$, but do this bad trying to maximize $|\Omega_{j,j'}|$ whenever $\Omega_{j,j'}>0$.

\end{remark}

\subsection{Proof of Theorem \ref{admissiblefamily}} \label{proofthm2}

\begin{proof}[Proof of Theorem \ref{admissiblefamily}]
If $\boldsymbol{K}_\alpha$ is defined by \eqref{admissiblekerneleq}, then
$$
\boldsymbol{W}_\alpha(i,j) = \exp \left( - \frac{\pi^2}{8} \lambda |i - j|^{2 \alpha} \right),
$$
which is positive semi-definite by Lemma \ref{lemschoenberg} and the fact that $\alpha \in (0,1]$.
Note that 
$$
\frac{2}{\pi} \arcsin \left( \exp \left( -x^2 \right) \right) = 1 - \frac{2 \sqrt{2} x}{\pi} + \mathcal{O}(x^3),
$$
as $x \to 0$. If $\lambda |i -j|^\alpha \le \varepsilon$, then it follows that the kernel $\boldsymbol{K}_\alpha$ defined in \eqref{admissiblekerneleq} satisfies
\begin{equation} \label{eq:kexpand}
\boldsymbol{K}_\alpha(i,j) = 1 - \lambda|i -j|^\alpha + \mathcal{O}(\varepsilon^3).
\end{equation}
Write
$$
S_\alpha(\boldsymbol{x},\boldsymbol{y}) =     \prod_{i=1}^d 
\boldsymbol{K}_\alpha(\boldsymbol{x}(i),\boldsymbol{y}(j))
= \exp\left( \sum_{i=1}^d \ln \left( \boldsymbol{K}_\alpha(\boldsymbol{x}(i),\boldsymbol{y}(j)) \right) \right).
$$
Using the series expansions 
$$
\ln(1 - x) = -x + \mathcal O(x^2), \quad \text{and} \quad \exp(x) = 1 + \mathcal O(x),
$$
as $x \rightarrow 0$, together with \eqref{eq:kexpand} gives
$$
\exp\left( \sum_{i=1}^d \ln \left( \boldsymbol{K}_\alpha(\boldsymbol{x}(i),\boldsymbol{y}(j)) \right) \right) 
=
\exp\left( - \lambda \sum_{i=1}^d |\boldsymbol{x}(i) - \boldsymbol{y}(i)|^\alpha \right)\left(1   + \mathcal{O}(\varepsilon^2 d) \right).
$$
That is,
$$
S_\alpha(\boldsymbol{x},\boldsymbol{y}) = 
\exp\left( - \lambda \| \boldsymbol{x} - \boldsymbol{y}\|^\alpha_\alpha \right)\left(1   + \mathcal{O}(\varepsilon^2 d) \right)
$$
which completes the proof.

\end{proof}

\end{appendix}

\end{document}